\def\eqref#1{equation~\ref{#1}}
\def\1{\bm{1}}
\DeclareMathAlphabet{\mathsfit}{\encodingdefault}{\sfdefault}{m}{sl}
\SetMathAlphabet{\mathsfit}{bold}{\encodingdefault}{\sfdefault}{bx}{n}
\newtheorem{theorem}{Theorem}[section]
\newtheorem{proposition}{Proposition}[section]
\newtheorem{definition}{Definition}[section]
\newtheorem{corollary}{Corollary}[section]
\newtheorem{assumption}{Assumption}[section]
\newtheorem{lemma}[theorem]{Lemma}
\newtheorem*{theorem*}{Theorem}
\newtheorem*{proposition*}{Proposition}
\definecolor{SDEblue}{HTML}{82BEF5}
\definecolor{ODEgreen}{HTML}{AEE8BF}
\definecolor{algDark}{HTML}{2B2B2B}
\definecolor{algText}{HTML}{F2F2F2}
\newtcolorbox{methodbox}[2]{%
  enhanced,
  width=\linewidth,
  colback=#2,            
  colframe=gray!70,      
  boxrule=1.5pt,         
  arc=12pt,              
  boxsep=0pt,            
  left=14mm, right=2mm,  
  top=0mm, bottom=0mm,   
  before skip=0pt, after skip=0pt,
  underlay={%
    \begin{scope}
      \clip[rounded corners=12pt]
        (interior.north west) rectangle (interior.south east);
      \path[fill=#2]
        (interior.north west) rectangle ([xshift=14mm]interior.south west);
    \end{scope}
  },
  overlay={%
    \node[rotate=90, anchor=center, font=\sffamily\bfseries\scshape\large]
      at ([xshift=7mm]frame.west) {#1};
  }%
}
\theoremstyle{remark}
\newtheorem*{remark}{Remark}
\title{Understanding Sampler Stochasticity in Training Diffusion Models for RLHF}
\author{
  Jiayuan Sheng\textsuperscript{1},\;
  Hanyang Zhao\textsuperscript{1},\;
  Haoxian Chen\textsuperscript{1},\;
  David D. Yao\textsuperscript{1},\;
  Wenpin Tang\textsuperscript{1}\\[2pt]
  \textsuperscript{1}Columbia University, Department of IEOR
}
\begin{document}

\maketitle

\begin{abstract}
\label{ast}
Reinforcement Learning from Human Feedback (RLHF) is increasingly used to fine-tune diffusion models, but a key challenge arises from the mismatch between stochastic samplers used during training and deterministic samplers used during inference. In practice, models are fine-tuned using stochastic SDE samplers to encourage exploration, while inference typically relies on deterministic ODE samplers for efficiency and stability. This discrepancy induces a \textbf{reward gap}, raising concerns about whether high-quality outputs can be expected during inference. In this paper, we theoretically characterize this reward gap and provide non-vacuous bounds for general diffusion models, along with sharper convergence rates for Variance Exploding (VE) and Variance Preserving (VP) Gaussian models. Methodologically, we adopt the generalized denoising diffusion implicit models (gDDIM) framework to support arbitrarily high levels of stochasticity, preserving data marginals throughout. Empirically, our findings through large-scale experiments on text-to-image models using denoising diffusion policy optimization (DDPO) and mixed group relative policy optimization (MixGRPO) validate that reward gaps consistently narrow over training, and ODE sampling quality improves when models are updated using higher-stochasticity SDE training.
\end{abstract}

\footnotetext[1]{Contacts:\,\texttt{\{js6646, hz2684, hc3136, yao, wt2319\}@columbia.edu}}


\section{Introduction}\label{sc1}
Diffusion models (e.g., Stable Diffusion \citep{stabledif}, SDXL \citep{sdxl}, FLUX \citep{flux}) have shown strong performance in text-to-image (T2I) tasks, and have also been extended beyond images to video \citep{video} and audio \citep{audio}. To meet downstream objectives such as aesthetics, safety, and alignment, it is essential to post-train with RLHF \citep{rlhf} for preference-driven improvements, often with a KL-regularization term to preserve performance on pretrained tasks \citep{ppo}. Widely used RLHF algorithms include DDPO \citep{ddpo} and GRPO \citep{grpo} variants (FlowGRPO \citep{FlowGRPO}, DanceGRPO \citep{DanceGRPO}, MixGRPO \citep{MixGRPO}). DDPO directly optimizes human-preference rewards by casting the denoising process as a Markov Decision Process (MDP); GRPO variants use group-relative advantages. RLHF training may also exhibit unstable trajectories, long inference times, and vulnerability to reward hacking \citep{rewardhack, CMPO} and efficient, robust samplers \citep{lu2022dpm,lu2025dpm} are desired for stable, high-quality fine-tuned models. See \citep{winata2025preference} for a broader review of successful RLHF algorithms for generative models.

The classical scheme DDPM \citep{ddpm}, a discretization of the score-based backward SDE \citep{fp,Song20}, allows for trajectory diversity and rich exploration in RLHF training; the deterministic DDIM sampler \citep{ddim} follows a probability-flow ODE and facilitates inference. High stochasticity during training is crucial for effective exploration of the reward landscape. However, during inference, deterministic ODE sampling is preferred to ensure consistency and computational efficiency. This creates an inherent tension: we train via a highly stochastic process (SDE) but we deploy a deterministic process (ODE) for inference. This discrepancy necessitates the following question:

\begin{center}
\begin{minipage}{0.75\linewidth}\centering
\textbf{To what extent can we guarantee ODE inference quality after we fine-tune diffusion models with SDE sampled trajectories?}
\end{minipage}
\end{center}

\begin{figure}[ht]
  \centering
  \makebox[\linewidth]{\includegraphics[width=\linewidth]{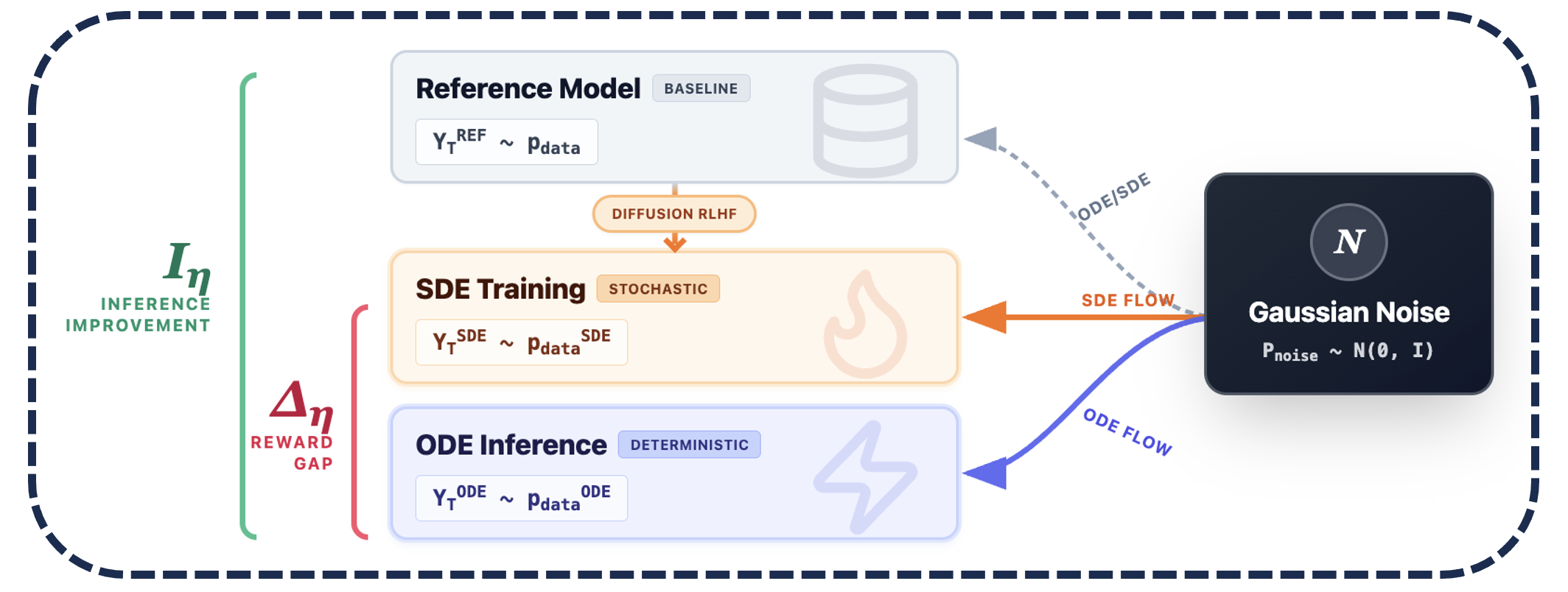}}
  \caption{An illustration of the problem formulation. $\Delta_\eta$ represents the ``\textbf{Reward Gap}" and $I_\eta$ represents the ODE reward improvement after SDE fine-tune. We would like $\Delta_\eta \downarrow$ with $I_\eta \uparrow$.}
  \label{illust}
\end{figure}

Figure \ref{illust} illustrates the problem formulation. While SDE and ODE denoising processes (\eqref{3.4}) yield the same terminal marginal for the pretrained model (see Section \ref{2.1}), a \textit{distributional difference} arises between the distribution ($p_\text{data}^{\text{SDE}}$) \textit{after} stochastic fine-tuning and the deterministic inference outcome ($p_\text{data}^{\text{ODE}}$) (see Section \ref{sc3}). We denote $\eta$ as the stochasticity scale, where $\eta=1.0$ corresponds to the standard variance in DDPM. We define this discrepancy as the \textbf{Reward Gap} ($\Delta_\eta$), which must be bounded to preserve the ODE reward improvement ($I_\eta$) gained over the baseline. In this paper, we present our analysis from three perspectives: 
\begin{itemize}[itemsep=0.4em]
\item \textbf{Theory}: We bound $\Delta_\eta$ between a generally SDE-fine-tuned model and its ODE-sampling counterpart using Gronwall’s inequality. Specifically, for VE, VP and mixture Gaussian models, the gap shrinks at sharp rates $O(1/T)$ and $O(e^{-T^2}/2)$, where $T$ is the denoising time horizon.


\item \textbf{Methodology}: To handle high stochasticity beyond $\eta = 1.0$ without changing marginals, we adopt the gDDIM scheme \citep{gddim} for arbitrary stochasticity levels. This framework is introduced in Section \ref{d-gddim} and \ref{exps}. See Appendix \ref{a-ddim} for more technical details.


\item \textbf{Experiments}: We evaluate large-scale T2I models and RLHF algorithms, DDPO and MixGRPO, to quantify the reward gap across multiple reward functions. The results show that $\Delta_\eta$ decrease as training quality improves and $\eta=1.2$ yields superior ODE inference performance, $I_\eta$.

\end{itemize}

{\bf Related Literature.} Higher-order ODE solvers such as RX-DPM \citep{choi2025enhanced},  DEIS \citep{zhang2022fast} approximate probability-flow. Recent analyses give broader convergence guarantees \citep{huang2025fast} and better representability \citep{chen2023sampling} for score-based diffusion. In parallel, RL-based fine-tuning leverages stochastic sampling: DRaFT differentiates through noisy trajectories \citep{clark2023directly}, Score-as-Action casts fine-tuning as stochastic control \citep{ZC25}, and Adjoint Matching enforces optimal memoryless noise schedules \citep{domingo2024adjoint}. Large-scale studies combine multiple rewards \citep{zhang2024large}; SEPO, D3PO, and ImageReFL boosts alignment \citep{zekri2025fine,yang2024using,sorokin2025imagerefl}. Finally, \citep{gaussianmixture} gives discretization error bound and \citep{gm} discusses guidance \citep{cfg} for Gaussian Mixture priors.

{\bf Organization of the paper.} The remainder of the paper is organized as follows.
In Section \ref{gen_inst}, we provide background on diffusion RLHF and gDDIM in particular. Section \ref{sc3} computes \textit{exact} $\Delta_\eta$ and $I_\eta$ for simple controls. More complex networks are analyzed in Section \ref{sc4}. Numerical experiments in Section \ref{exps} validate our theoretical bounds by fine-tuning large-scale T2I models with DDPO and GRPO algorithms. Conclusion is given in Section \ref{cc}.

\section{Preliminaries}
\label{gen_inst}

\subsection{Forward-Backward Diffusion Models}\label{2.1}
The goal of diffusion models \citep{Song20} is to generate new
samples similar to $p_{ \mbox{\tiny data}}(\cdot)$. 
The forward process is given by an SDE:
\begin{equation}\label{3.1}
dX_t = f(t, X_t)dt + g(t)dB_t, \quad X_0\sim p_{\mbox{\tiny data}}(\cdot),
\end{equation}
where $\{B_t\}$ is the $d$-dimensional Brownian motion,
and
$f: \mathbb{R}_+ \times \mathbb{R}^d \to \mathbb{R}^d$
and $g: \mathbb{R}_+ \to \mathbb{R}_+$ are given model parameters under regularity conditions \citep{SV79}.

We use {\em denoising score matching} \citep{Vi11} to minimize the mean square error between a parametrized neural network $s_{\theta}(t,X_t)$ and the unknown term $\nabla \log p(T-t, \overline{X}_t)$ in the time reversal SDE \citep{HP86}. The reverse-time SDE becomes:
\begin{equation}\label{3.2}
  dY_t = \left(-f(T-t,Y_t)+g^2(T-t)s_\theta(T-t, Y_t)\right)dt + g(T-t)dB_t,
  \quad Y_0 \sim p_{\mbox{\tiny noise}}(\cdot).
\end{equation}
For richer levels of {\em sampler stochasticity}, a flexible stochasticity parameter $\eta\geq 0$ is introduced:
\begin{equation}\label{3.4}
dY_t = \left(-f(T-t,Y_t)+\frac{1+\eta^2}{2}g^2(T-t)s_\theta(T-t, Y_t)\right)dt + \eta\, g(T-t)dB_t, \quad Y_0 \sim p_{\mbox{\tiny noise}}(\cdot),
\end{equation}
With a divergence-free noise scale, \eqref{3.4} preserves the terminal marginals \citep{Ander82}.

\subsection{Discretization -- Stochastic gDDIM}\label{d-gddim}

As shown in \citep{Song20}, for most diffusion models, the model parameters are:
\begin{equation*}
f(t,x) = \frac{1}{2} \frac{d \log \alpha_t}{dt} x \quad \mbox{and} \quad g(t) = \sqrt{- \frac{d \log \alpha_t}{dt}},
\end{equation*}
where $\{\alpha_t\}_{t=0}^T$ is a decreasing sequence from 1 to 0.
For all $\eta \geq 0$, \citep{gddim} proposed the \textit{generalized} DDIM update:
\begin{equation}\label{eq:gddim}
\begin{aligned}
x_{t-\Delta t}= \sqrt{\frac{\alpha_{t-\Delta t}}{\alpha_t}}x_t & +\left(\sqrt{\frac{\alpha_{t-\Delta t}}{\alpha_t}} (1-\alpha_t)-\sqrt{(1-\alpha_{t-\Delta t}-\sigma_t^2)(1-\alpha_t)}\right)s_\theta(t, x_t) \\
& \qquad \qquad \qquad\qquad \qquad \qquad \quad +\sigma_t(\eta) \mathcal{N}(0,I), \quad x_T \sim p_{\mbox{\tiny noise}}(\cdot).
\end{aligned}
\end{equation}
where $\{x_t\}_{t=0}^T$ is the discretized sequence of backward diffusion $Y_t$ and
\begin{equation}
\label{eq:sigmaeta}
\sigma_t(\eta) = (1-\alpha_{t-\Delta t})\left(1-\left(\frac{1-\alpha_{t-\Delta t}}{1-\alpha_t}\right)^{\eta^2}\left(\frac{\alpha_t}{\alpha_{t-\Delta t}}\right)^{\eta^2}\right).
\end{equation}
For any $\eta>0$, the discrete scheme \eqref{eq:gddim} has terminal marginal equal to \eqref{3.4}. We use this discretization formulation for post-training with $\eta > 1.0$ in Section \ref{exps}. Details in Appendix \ref{a-ddim}.

\subsection{Diffusion RLHF}\label{difrlhf}

RLHF was originally proposed for LLM alignment \citep{Bai22, rlhf}. \citep{ddpo, dpok, Lee23} first formulated the denoising step as Markov decision processes (MDPs) to fine-tune diffusions.
\citep{Gao24, ZC24, ZC25} developed an RL approach based on continuous-time models.
Here we focus on
{\em Denoising Diffusion Policy Optimization} (DDPO) and {\em Group Relative Policy Optimization} (GRPO).

{\bf DDPO} \citep{ddpo}: 
We formulate the denoising steps $\{x_T, x_{T-1},\cdots, x_0\}$ as an MDP:
\begin{align*}
    &s_t := (\bm c, t, x_t), 
    &\pi( a_t | s_t) := p_\theta(x_{t-1} |  x_t, \bm c),\qquad
    & \mathbb{P}(s_{t+1} |  s_t, a_t) := \big( \delta_{\bm c}, \delta_{t-1}, \delta_{x_{t-1}} \big), \\
    &a_t := x_{t-1}, 
    &\rho_0(s_0) := \big(p(\bm c), \delta_T, \mathcal{N}(0,I)\big),\qquad
    &R(s_t, a_t) :=
        \begin{cases}
            r(x_0, \bm c) & \text{if } t = 0, \\
            0 & \text{otherwise},
        \end{cases}
\end{align*}
where $\delta_{\bullet}$ is the Dirac point mass concentrating at $\bullet$. The objective of DDPO is to maximize:
\begin{equation}
\label{DDPOobj}
\mathcal J_{DDPO}(\theta) := \mathbb E_{\bm c\sim p(\bm c), x_0\sim p_\theta(x_0|\bm c)}[r(x_0,\bm c)].
\end{equation}
The policy gradient of \eqref{DDPOobj} is
$\nabla_\theta\mathcal J_{DDPO} = \mathbb E\left[\sum_{t=0}^T\nabla_\theta \log p_\theta(x_{t-1}|x_t,\bm c)\, r(x_0, \bm c)\right]$ \citep{reinforce}.
A common parameterization of $p_\theta$ is isotropic Gaussian \citep{montecarlo}.

{\bf GRPO} \citep{grpo}: GRPO first computes the {\em advantages}:
\begin{equation*}
A_i := \frac{r(x_0^i,\bm c)-\frac{1}{G} \sum_{k=1}^G r(x_0^k,\bm c)}{\text{std}\left(\{r(x_0^k,\bm c)\}_{k=1}^G\right)},
\end{equation*}
where $G$ is the group size; $\text{std}\left(\{r(x_0^k,\bm c)\}_{k=1}^G\right)$ denotes the \textit{standard deviation} of group rewards.
The objective of GRPO is to maximize:
\begin{equation}
\mathcal J_{GRPO}(\theta)= 
\mathbb E\left[
\frac{1}{G}\sum_{i=1}^G\frac{1}{N}\sum_{n=1}^N\,\min\left(
\rho_t^i(\theta)A_i,\text{clip}(\rho_t^i(\theta),1-\epsilon,1+\epsilon)A_i
\right)\right],
\end{equation}
where $\operatorname{clip}(\rho_t^i(\theta), 1-\epsilon, 1+\epsilon):=\min\{\max\{\rho_t^i(\theta), 1-\epsilon\},\, 1+\epsilon\}$ restricts the  likelihood ratio $\rho_t^i(\theta)=\frac{p_\theta(x_{t-1}^i|x_t^i,\bm c)}{p_{\theta_{\mbox{\tiny old}}}(x_{t-1}^i|x_t^i,\bm c)}$ within the range $[1-\epsilon, 1+\epsilon]$ by imposing hard constraints on the boundaries; 
$\{x_t^i\}$ is the $i$-th sample trajectory in the group; $N$ is the number of grouped outputs, and
the expectation is with respect to
$\bm c\sim p(\bm c)$ and $\{x^i_t\}\sim\pi_{\theta_{\mbox{\tiny old}}}(\cdot|\bm c)$.

\section{Theory on the reward gap}
\label{sc3}
As discussed in Section \ref{sc1}, forward–backward diffusion models
typically employ \emph{stochastic} SDE dynamics for exploration in the RLHF training, while adopting
\emph{deterministic} ODE samplers for inference (see Figure \ref{illust}). In practice, we follow entropy-regularized objectives to fine-tune a parametrized family of score functions \citep{UZ24,Tang24}:
\begin{definition}
Define
\begin{enumerate} 
    \item \(\{Y_T^{\mathrm{REF}}\}\) as the samples extracted from the referenced base model following~\eqref{3.4} with terminal marginal $p_{data}$;
    
    \item We fine-tune the following objectives under a fixed stochasticity
parameter \(\eta\):
\begin{equation}\label{reward-kl}
F_\eta(\theta)
=
\mathbb{E}[r(Y_T^\theta)]
-
\beta\,\mathrm{KL}(Y_T^\theta \,\|\, Y_T^{\mathrm{REF}}),
\qquad
\theta_\eta^* := \arg\max_\theta F_\eta(\theta);
\end{equation}
let \(\{Y_t^{\mathrm{SDE}}\} = \{Y_t(\theta_\eta^*)\}\) be the optimal fine-tuned model, and \(\{Y_t^{\mathrm{ODE}}\}\) be the deterministic sampler obtained
by letting \(\eta = 0\):
\begin{equation}\label{3.1.1}
\begin{aligned}
& d Y_t^{\text{SDE}} = \left(-f(T-t,Y_t^{\text{SDE}})+\frac{1+\eta^2}{2}g^2(T-t)s_{\theta_\eta^*}(T-t, Y_t^{\text{SDE}})\right) dt+\eta\,g(T-t)dB_t,\\
& d Y_t^{\text{ODE}} = \left(-f(T-t,Y_t^{\text{ODE}})+\frac{1}{2}g^2(T-t)s_{\theta_\eta^*}(T-t, Y_t^{\text{ODE}})\right) dt.
\end{aligned}
\end{equation}
\end{enumerate}
\end{definition}

Although a pretrained model admits
identical terminal marginals for arbitrary $\eta$ in the form of \eqref{3.4}, once the data distribution is aligned to a human-preference reward $r(x)$, the distribution of \(\{Y_T^{\mathrm{ODE}}\}\) and \(\{Y_T^{\mathrm{SDE}}\}\) (we denote as $p_{data}^{\text{SDE}}$ and $p_{data}^{\text{ODE}}$) are not necessarily the same.

Under a fixed noise level $\eta$, the terminal marginal satisfies a \textit{reward-tilting} distribution
\(p_{data}^{\text{SDE}}\propto p_{\text{\tiny data}}(x)\exp(r(x)/\beta)\) \citep{zhang2024rewardoveropt}. The terminal score function
\[s_{\theta_\eta^*}(T, x)\approx\nabla \log p_{\theta_\eta^*}(T, x) = \nabla \log p_{\text{\tiny data}}(x)+\frac{\nabla r(x)}{\beta},\]
but the fine-tuned parameter $\theta_\eta^*$ does not regulate the intermediate trajectory, as the RLHF objective \eqref{reward-kl} only considers the terminal marginal. Therefore, we could not establish a general relationship between $s_{\theta_\eta^*}(t, x)$, $r(x)$, and $\nabla \log p_{\theta_\eta^*}(t, x)$ for $0<t<T$. In general,
\[\nabla\!\cdot\left(p_{\theta_\eta^*}(t,x)\cdot s_{\theta_\eta^*}(t,x)\right)\neq\Delta p_{\theta_\eta^*}(t,x).\]
As the reverse-time Fokker–Planck correspondence no longer satisfies the divergence free condition \citep{Ander82}, $p_{data}^{\text{SDE}}$ and $ p_{data}^{\text{ODE}}$ need not coincide.

\begin{figure}[ht]
  \centering
  \makebox[\linewidth]{\includegraphics[width=\linewidth]{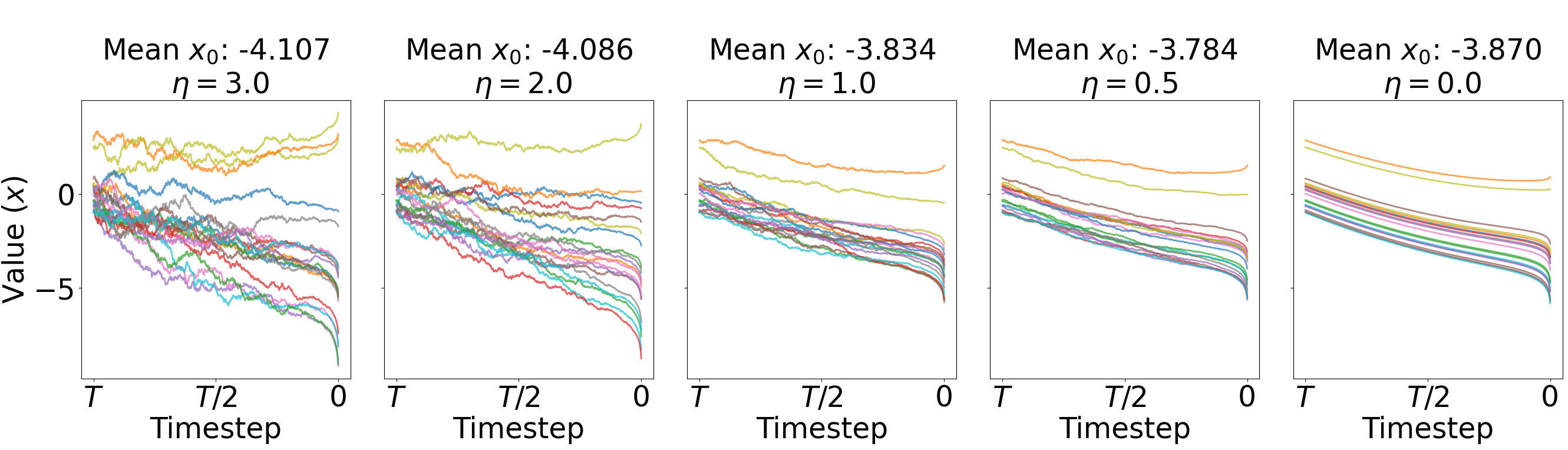}}
  \caption{One-dimensional VP dynamics under different noise levels $\eta$, using the same control function, demonstrating $\mathbb E(Y_T^{\text{SDE}})\approx \mathbb E(Y_T^{\text{ODE}})$ despite $p_{data}^{\text{SDE}}\neq p_{data}^{\text{ODE}}$.}
  \label{fig:intro-trajectories}
\end{figure}

However, we are motivated by an observation in Figure \ref{fig:intro-trajectories} that 1D Variance Preserving (VP) dynamics, when run with the same constant downward drift, maintain a similar mean value regardless of the noise level $\eta$. we next rigorously quantify and generalize this observation Bounded reward gaps for simple dynamics are important for two reasons:
\begin{itemize}
\item \textbf{Error Isolation.} Under VP (as well as  VE) models, we are able to compute the \textit{exact} reward difference induced by changing $\eta$, free from the well-studied score-approximation and discretization error \citep{gaussianmixture}, with which we will also analyze in Section \ref{sc4} and upper-bound reward gaps in those more complex settings.

\item \textbf{Parametrization Insight.} Parametrization is inherent to the pre-trained (see \eqref{3.2}) and the fine-tuned model (see \eqref{3.1.1}) and affect RLHF performances in practice \citep{han2025stochastic}. Here we show that reward gap is small \textit{even under simple parametrization}. As Section \ref{sc4} bounds complex controls \textit{quadratic} in $\eta$, our results are valid for general networks.
\end{itemize}

Our goals are to quantify the reward improvement of fine-tuning $I_\eta$ and the reward gap $\Delta_\eta$ between SDE and ODE inference. Here we formally define:

\begin{definition}
As illustrated in Figure \ref{illust}, we define
\begin{enumerate}
    \item 
    Improvement of fine-tuning as the reward difference between $\{Y^{\text{ODE}}_t\}$ and $\{Y^{\text{REF}}_t\}$, $\mathcal{}I_\eta:=\mathbb{E}[r(Y_T^\text{ODE})] - \mathbb{E}[r(Y_T^\text{REF})]\geq 0.$
    \item 
    Reward gap as the reward difference between $\{Y^{\text{ODE}}_t\}$ and $\{Y^{\text{SDE}}_t\}$,
    $\Delta_\eta:=\left|\mathbb{E}[r(Y_T^\text{ODE})] - \mathbb{E}[r(Y_T^\text{SDE})]\right|.$
\end{enumerate}
\end{definition}
In what follows, we analyze VE/VP models
with a Gaussian or mixture Gaussian target distribution,
where the score function has a closed-form expression without approximation and discretization errors, so the only discrepancy comes from changing $\eta$.
\begin{remark}
$\mathbb{E}[r(Y_T^\text{ODE})]\geq \mathbb{E}[r(Y_T^\text{REF})]$ holds for all models, following from the optimality of $\theta^*_\eta$. However, the sign of $\left(\mathbb{E}[r(Y_T^\text{ODE})] - \mathbb{E}[r(Y_T^\text{SDE})]\right)$ depends on the process dynamics as well as RLHF algorithms.
\end{remark}

\subsection{VE with a Gaussian target}\label{sc3-1}
We first consider the one-dimensional VE model:
\begin{equation} \label{eq:VE}
    dX_t = \sqrt{2t}\, dB_t, \quad \mbox{with } \, p_{\mbox{\tiny data}}(\cdot) = \mathcal{N}(0,1).
\end{equation}
Since $X_t \sim \mathcal{N}(0,t^2+1)$, 
the (exact) score function is $\nabla \log p(t,x) = - \frac{x}{t^2+1}$.
So the ``pretrained" model is:
\begin{equation*}
d Y_t^{\text{REF}} = -\frac{(1+\eta^2)(T-t)}{1+(T-t)^2}Y_t^{\text{REF}} dt+\eta\sqrt{2(T-t)}dB_t.
\end{equation*}
Next we set the reward function $r(x) = - (x-1)^2$,
so the goal of fine-tuning is to drive the sample towards mean $1$.
We also specify the fine-tuned SDE and ODE as defined in \eqref{3.1.1} by
\begin{equation*}
\begin{aligned}
& d Y_t^{\text{SDE}} = -\frac{(1+\eta^2)(T-t)}{1+(T-t)^2}(Y_t^{\text{SDE}}+\theta_\eta^*) dt+\eta\sqrt{2(T-t)}dB_t,\\
& d Y_t^{\text{ODE}} = -\frac{T-t}{1+(T-t)^2}(Y_t^{\text{ODE}}+\theta_\eta^*) dt,
\end{aligned}
\end{equation*}
with $\theta_\eta^*$ maximizing the entropy-regularized reward (\eqref{reward-kl}).
The following theorem gives bounds for $I_\eta$ and $\Delta_\eta$ under VE,
and the proof is deferred to Appendix \ref{pf-t1}.

\begin{theorem}\label{ve-t1}
Consider the Variance Exploding model (\eqref{eq:VE}), 
with the reward function 
\begin{equation}\label{reward-quad}
r(x) = - (x-1)^2.
\end{equation}
For $\eta > 0$, we have 
$\theta^*_\eta = - \left[(1+\frac{\beta}{2})\left(1-(1+T^2)^{-\frac{1+\eta^2}{2}}\right)\right]^{-1}$.
Moreover, for $T\geq 1$
\begin{equation}\label{bound}
0\leq \Delta_\eta \leq \frac{1}{2T}+o\left(\frac{1}{T}\right) \quad \mbox{and} \quad \mathcal{I}_\eta \geq 1 - \frac{1}{2T}+o\left(\frac{1}{T}\right).
\end{equation}
\end{theorem}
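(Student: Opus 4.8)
The plan is to reduce every quantity in the theorem to an elementary computation on scalar Gaussian means, exploiting that \eqref{eq:VE} is linear and the priors are Gaussian. Each of $Y^{REF},Y^{SDE},Y^{ODE}$ solves a linear (O)SDE of the form $dY=a(t)(Y+\theta)\,dt+b(t)\,dB$, so each terminal variable is Gaussian and I only need to track its mean and variance. Writing $A(t)=\int_0^t a(s)\,ds$, the mean satisfies $\dot m=a(m+\theta)$ with $m(0)=0$, hence $m(T)=\theta(e^{A(T)}-1)$; direct integration of the two drift coefficients gives $e^{A(T)}=(1+T^2)^{-(1+\eta^2)/2}$ (SDE) and $(1+T^2)^{-1/2}$ (ODE). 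The structural fact I would establish first is that all three terminal variances equal $1$: for $Y^{REF}$ this is the marginal-preserving property of \eqref{3.4}; the shift $\theta$ does not enter the variance ODE $\dot V=2aV+b^2$, so $V_{SDE}(T)=V_{REF}(T)=1$; and $V_{ODE}(T)=(1+T^2)^{-1}(T^2+1)=1$ directly. Therefore $\mathcal J_\bullet=-1-(m_\bullet-1)^2$ for each $\bullet\in\{REF,SDE,ODE\}$, and the KL of two equal-variance Gaussians is $\tfrac12 m_{SDE}^2$ (since $m_{REF}=0$).

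With this reduction the objective \eqref{reward-kl} becomes a scalar quadratic. Setting $c_\eta:=1-(1+T^2)^{-(1+\eta^2)/2}$ and $w:=\theta c_\eta$ (so $m_{SDE}=-w$), I get $F_\eta=-1-(w+1)^2-\tfrac{\beta}{2}w^2$, whose maximizer $w=-2/(2+\beta)$ recovers $\theta^*_\eta=-[(1+\tfrac{\beta}{2})c_\eta]^{-1}$ and $m_{SDE}=2/(2+\beta)=:k\in(0,1)$, matching the stated value. Substituting $\theta^*_\eta$ into the ODE mean with $d_T:=1-(1+T^2)^{-1/2}$ gives $m_{ODE}=k\,\rho$ where $\rho:=d_T/c_\eta$.

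For the bounds I would factor the equal-variance reward differences as differences of squares. First, $\mathcal J_{ODE}-\mathcal J_{SDE}=(m_{SDE}-1)^2-(m_{ODE}-1)^2=k(1-\rho)\big(k(1+\rho)-2\big)$, so $\Delta_\eta=k(1-\rho)\big(2-k(1+\rho)\big)\ge0$. Then I expand as $T\to\infty$ using $\epsilon:=(1+T^2)^{-1/2}=\tfrac1T+O(T^{-3})$ and the fact that for $\eta>0$ the heavier exponent gives $(1+T^2)^{-(1+\eta^2)/2}=\epsilon^{1+\eta^2}=o(\epsilon)$; hence $c_\eta\to1$, $1-\rho=(c_\eta-d_T)/c_\eta=\epsilon+o(\epsilon)=\tfrac1T+o(\tfrac1T)$, and $2-k(1+\rho)\to2(1-k)$. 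This yields $\Delta_\eta=\tfrac{2k(1-k)}{T}+o(\tfrac1T)$, and since $2k(1-k)\le\tfrac12$ for all $k\in[0,1]$ the upper bound $\Delta_\eta\le\tfrac1{2T}+o(\tfrac1T)$ follows. The same expansion applied to $\mathcal I_\eta=\mathcal J_{ODE}-\mathcal J_{REF}=1-(m_{ODE}-1)^2$ gives $\mathcal I_\eta=\big(1-(1-k)^2\big)-\tfrac{2k(1-k)}{T}+o(\tfrac1T)$: the shortfall from the limiting improvement (maximal, equal to $1$ as $\beta\downarrow0$ so $k\uparrow1$) is governed by the same coefficient $2k(1-k)\le\tfrac12$, giving $\mathcal I_\eta\ge1-\tfrac1{2T}+o(\tfrac1T)$.

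The main obstacle I anticipate is not any single integral but the asymptotic bookkeeping: one must keep the two decay scales $(1+T^2)^{-1/2}$ and $(1+T^2)^{-(1+\eta^2)/2}$ separate, retain the product $k(1-\rho)\big(2-k(1+\rho)\big)$ to exactly first order in $1/T$, and make the role of $\beta$ explicit so that the universal constant $\tfrac12=\max_{k\in[0,1]}2k(1-k)$ emerges in both bounds. A secondary point worth verifying carefully is the marginal-preserving identity $V_{SDE}(T)=V_{ODE}(T)=1$, since it is precisely what forces the entire reward gap to come from the scalar mean discrepancy $m_{SDE}-m_{ODE}=k(1-\rho)=O(1/T)$ rather than from any variance mismatch.
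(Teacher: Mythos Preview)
Your approach is essentially the paper's: compute the Gaussian terminal laws of the three linear processes, optimize the resulting quadratic-in-$\theta$ objective to obtain $\theta^*_\eta$, and bound $\Delta_\eta$ and $\mathcal I_\eta$ through the coefficient $2\bar\beta(1-\bar\beta)\le\tfrac12$ (the paper's $\bar\beta=(1+\beta/2)^{-1}$ is your $k$). The one discrepancy is that the paper initializes at $Y_0\sim\mathcal N(0,T^2)$ rather than the exact $\mathcal N(0,1+T^2)$, so the terminal variances come out as $1-(1+T^2)^{-(1+\eta^2)}$ (SDE/REF) and $1-(1+T^2)^{-1}$ (ODE) instead of exactly $1$; this contributes an additional variance-mismatch term $(1+T^2)^{-(1+\eta^2)}-(1+T^2)^{-1}=O(T^{-2})$ to $\Delta_\eta$ that the paper tracks explicitly but which is $o(1/T)$ and does not affect your leading-order constant.
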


\subsection{VP with a Gaussian target}\label{sc3-2}
Now we consider the one-dimensional VP model: 
\begin{equation} \label{eq:VP}
dX_t = -tX_tdt + \sqrt{2t}\, dB_t, \quad \mbox{with } \, p_{\mbox{\tiny data}}(\cdot) = \mathcal{N}(0,1).
\end{equation}
Under the same setup as \eqref{3.1.1} and the VE case,
we have:
\begin{equation*}
    \begin{aligned}
        & d Y_t^{\text{REF}} = -\eta^2(T-t)Y_t^{\text{REF}} dt+\eta\sqrt{2(T-t)}dB_t, \\
        & d Y_t^{\text{SDE}} = -\eta^2(T-t)Y_t^{\text{SDE}}\,dt-(1+\eta^2)(T-t)\theta^*_\eta(t)\,dt+\eta\sqrt{2(T-t)}dB_t, \\
        & d Y_t^{\text{ODE}} = -(T-t)\theta_\eta^*(t)dt,
    \end{aligned}
\end{equation*}
where a time-dependent control $\theta_\eta(t): = \theta_\eta e^{-\frac{(T-t)^2}{2}}$ is used for fine-tuning.
The following theorem gives bounds for $I_\eta$ and $\Delta_\eta$ under VP,
and the proof is deferred to Appendix \ref{pf-t2}.

\begin{theorem}\label{vp-t2}
Consider the VP model (\eqref{eq:VP}), 
with the reward function $r(x)=-(x-1)^2$.
For $\eta > 0$, we have 
$\theta^*_\eta = - \left[(1+\frac{\beta}{2})\left(1-e^{-\frac{(1+\eta^2) T^2}{2}}\right)\right]^{-1}$.
Moreover, for $T\geq 1$
\begin{equation}\label{vp-bound}
0\leq \Delta_\eta \leq \frac{e^{-T^2}}{2}+o\left(e^{-T^2}\right)
\quad \mbox{and} \quad 
\mathcal{I}_\eta \geq 1 - \frac{e^{-T^2}}{2}+o\left(e^{-T^2}\right).
\end{equation}
\end{theorem}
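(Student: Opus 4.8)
The plan is to reduce everything to one-dimensional Gaussian bookkeeping. Because all three processes $\{Y_t^{REF}\}$, $\{Y_t^{SDE}\}$, $\{Y_t^{ODE}\}$ are affine in the state and start from the Gaussian prior $p_{\mbox{\tiny noise}} = \mathcal{N}(0,1)$, each terminal law $Y_T^\bullet$ is Gaussian and is determined by its mean and variance alone. First I would establish the structural fact that all three are variance-preserving, $\Var(Y_T^{REF}) = \Var(Y_T^{SDE}) = \Var(Y_T^{ODE}) = 1$. For the SDE this follows by solving the variance ODE $\dot v = -2\eta^2(T-t)v + 2\eta^2(T-t)$ with the integrating factor $\exp(\eta^2 t(2T-t))$, which collapses to $v\equiv 1$; for the ODE the drift carries no state dependence, so the flow is a pure translation of $\mathcal{N}(0,1)$ and the variance is trivially preserved. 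This is the crucial reduction: both $\Delta_\eta$ and $\mathcal{I}_\eta$ will then depend only on the terminal means.

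Next I would compute the three terminal means, each of which solves a scalar linear ODE. The reference mean stays at $0$. For the SDE mean $m(t)$ I would solve $\dot m = -\eta^2(T-t)m - (1+\eta^2)(T-t)\theta^*_\eta(t)$ with an analogous integrating factor; substituting $u = T-t$ and using the Gaussian control weighting $\theta^*_\eta(t) = \theta^*_\eta e^{-(T-t)^2/2}$ reduces the integral to the closed form $m_{SDE} = -\theta^*_\eta\big(1 - e^{-(1+\eta^2)T^2/2}\big)$. The same substitution applied to the translation $Y_T^{ODE} = Y_0 - \theta^*_\eta\int_0^T (T-s)e^{-(T-s)^2/2}\,ds$ yields $m_{ODE} = -\theta^*_\eta\big(1 - e^{-T^2/2}\big)$. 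The role of the weighting $e^{-(T-t)^2/2}$ is precisely to produce these clean exponential factors, paralleling the polynomial factors $(1+T^2)^{-(1+\eta^2)/2}$ that arise in the VE case.

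With the marginals in hand, the optimization is explicit. Since $Y_T^{SDE}$ and $Y_T^{REF}$ are both unit-variance Gaussians, the reward term is $\mathbb{E}[r(Y_T^\theta)] = -\big(1 + (m_{SDE}-1)^2\big)$ and the regularizer is $\mathrm{KL}(Y_T^\theta\|Y_T^{REF}) = \tfrac12 m_{SDE}^2$, so $F_\eta$ is a concave quadratic in the scalar $\theta_\eta$ through the linear map $\theta_\eta\mapsto m_{SDE}$. Setting the derivative to zero gives the optimal mean $m_{SDE} = \tfrac{2}{2+\beta}$, and inverting the linear map recovers the stated $\theta^*_\eta = -\big[(1+\tfrac{\beta}{2})\big(1 - e^{-(1+\eta^2)T^2/2}\big)\big]^{-1}$.

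Finally I would assemble the bounds. Evaluating the metrics gives $\mathcal{J}_{REF} = -2$, $\mathcal{J}_{SDE} = -1 - (m_{SDE}-1)^2$, and $\mathcal{J}_{ODE} = -1 - (m_{ODE}-1)^2$, so $\mathcal{I}_\eta = 1 - (m_{ODE}-1)^2$ and $\Delta_\eta = \big|(m_{SDE}-1)^2 - (m_{ODE}-1)^2\big|$. The entire $T$-dependence is carried by the difference $A - B = e^{-T^2/2} - e^{-(1+\eta^2)T^2/2}$ between the SDE and ODE mean factors, which is exponentially small; factoring the difference of squares and expanding as $T\to\infty$ then produces the claimed exponential rate, with the lower-order terms collected into $o(\cdot)$. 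I expect the main obstacle to be exactly this asymptotic step: one must track the leading exponential carefully (the two terms in $A-B$ decay at the distinct rates $\tfrac12$ and $\tfrac{1+\eta^2}{2}$), control the denominators $A,B\to 1$, and check that the coefficient is dominated by the stated constant uniformly in the relevant parameter range — the same bookkeeping that underlies the $O(1/T)$ VE estimate, now with exponential rather than polynomial decay.
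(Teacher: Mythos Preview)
Your proposal is correct and follows essentially the same route as the paper: compute the Gaussian terminal laws of the three processes (via the integrating-factor solution of the linear mean/variance ODEs, as in the paper's Propositions~\ref{vp-p1}--\ref{vp-p2}), optimize the quadratic $F_\eta$ to obtain $\theta^*_\eta$, and then bound the difference of squares $(m_{SDE}-1)^2-(m_{ODE}-1)^2$ by expanding the ratio $\frac{e^{-T^2/2}-1}{e^{-(1+\eta^2)T^2/2}-1}$ as $T\to\infty$. The only cosmetic difference is that you verify variance preservation directly from the scalar variance ODE $\dot v=-2\eta^2(T-t)v+2\eta^2(T-t)$, whereas the paper invokes its general linear-SDE lemma; the content is identical.
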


\subsection{VE/VP with a mixture Gaussian target}\label{sc3-3}
The previous results can be extended to multidimensional setting,
with a mixture Gaussian target distribution.
Recall that the probability density of a mixture Gaussian has the form:
\begin{equation*}
\sum_{i=1}^k \frac{\alpha_i}{(2\pi)^{d/2}(\det{\bm \Sigma_i})^{1/2}}\cdot\exp\left(-\frac{1}{2}(x-\bm\mu_i)^T{\bm \Sigma_i}(x-\bm\mu_i)\right),
\end{equation*}
where $\alpha_i$ is the weight of the $i$-th Gaussian component. 
The following corollary bounds the reward gap for a mixture Gaussian target distribution,
and the proof is deferred to Appendix \ref{pf-p1}.

\begin{corollary}\label{mix-p}
Let the reward function be $r(\bm x) = - ||\bm x - \bm r||^2$ such that 
$\bm\mu_i\cdot\bm r = 0$ for all $i\in \{1,\ldots,k\}$, $\Sigma_i\equiv \bm I_d$ and $\mathbb E[Y_T^{\text{REF}}]=\bm 0$.
Then the same bounds on reward gap hold, i.e.,
\begin{equation}
\Delta_\eta\leq \left\{
\begin{array}{ll}
(2T)^{-1}& \text{for VE}, \\
e^{-T^2}/2& \text{for VP}.
\end{array}
\right.
\end{equation}
\end{corollary}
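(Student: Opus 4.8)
The plan is to reduce the $d$-dimensional mixture problem to the one-dimensional Gaussian problems already settled in Theorems \ref{ve-t1} and \ref{vp-t2}, by exploiting an orthogonal decomposition aligned with the reward direction. Write $\hat{\bm r} = \bm r/\|\bm r\|$ and split every state as $\bm x = x_\parallel\hat{\bm r} + \bm x_\perp$, with $x_\parallel = \bm x\cdot\hat{\bm r}$ and $\bm x_\perp\perp\bm r$. The first step is to observe that, because $\bm\mu_i\cdot\bm r = 0$ and $\Sigma_i\equiv\bm I_d$, each mixture component $\mathcal{N}(\bm\mu_i,\bm I_d)$ factorizes into an $\mathcal{N}(0,1)$ law in the $\hat{\bm r}$-coordinate and an $\mathcal{N}(\bm\mu_{i,\perp},\bm I)$ law on the orthogonal block; since the $\parallel$-factor is identical across all components, the target's $\parallel$-marginal is exactly the single Gaussian $\mathcal{N}(0,1)$ and is independent of $\bm x_\perp$. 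Both the VE and VP forward SDEs have isotropic coefficients, hence are rotation-invariant and act independently on the $\hat{\bm r}$-direction and its complement, so this independence and the single-Gaussian form of the $\parallel$-marginal persist for all $t$. Consequently the score splits as $\nabla\log p(t,\bm x) = (\partial_{x_\parallel}\log p)\,\hat{\bm r} + \nabla_\perp\log p$, where the $\parallel$-component is the linear one-dimensional Gaussian score and the backward dynamics in the $\hat{\bm r}$-direction coincide exactly with those of Section \ref{sc3-1} (VE) or \ref{sc3-2} (VP).

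Next I would expand the quadratic reward. Since $\bm r$ lies entirely in the $\hat{\bm r}$-direction, $r(\bm x) = -(x_\parallel-\|\bm r\|)^2 - \|\bm x_\perp\|^2$, so $\mathbb{E}[r(Y_T)]$ separates into a $\parallel$-part and a $\perp$-part and the gap becomes $\Delta_\eta = \big|(\mathcal{J}^{\parallel}_{ODE}-\mathcal{J}^{\parallel}_{SDE}) + (\mathcal{J}^{\perp}_{ODE}-\mathcal{J}^{\perp}_{SDE})\big|$. The crux is to show that the orthogonal contributions cancel. Here I would argue that the optimizer of the entropy-regularized objective \eqref{reward-kl} is a pure mean-shift along $\hat{\bm r}$: the orthogonal reward $-\|\bm x_\perp\|^2$ is best served by keeping the $\perp$-mean at $\bm 0$, and since $\mathbb{E}[Y_T^{REF}]=\bm 0$ already has zero $\perp$-mean, both the reward term and the KL term of $F_\eta$ are simultaneously optimized by taking zero orthogonal control. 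With no control in the orthogonal block, the $\perp$-dynamics are the exact pretrained gDDIM dynamics, which preserve marginals for every $\eta\ge 0$ (Section \ref{d-gddim}); hence $Y_{T,\perp}^{ODE}$ and $Y_{T,\perp}^{SDE}$ share the same law, giving $\mathbb{E}[\|Y_{T,\perp}^{ODE}\|^2] = \mathbb{E}[\|Y_{T,\perp}^{SDE}\|^2]$ and $\mathcal{J}^{\perp}_{ODE} = \mathcal{J}^{\perp}_{SDE}$.

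With the orthogonal terms cancelled, $\Delta_\eta = |\mathcal{J}^{\parallel}_{ODE}-\mathcal{J}^{\parallel}_{SDE}|$ is precisely the reward gap of the one-dimensional Gaussian model in the $\hat{\bm r}$-coordinate (the reward mode shifted from $1$ to $\|\bm r\|$), so by the same computations the bounds $(2T)^{-1}$ (VE) and $e^{-T^2}/2$ (VP) follow from Theorems \ref{ve-t1} and \ref{vp-t2}. I expect the main obstacle to be making the orthogonal-cancellation step rigorous: one must pin down the admissible control class so that it acts only as a drift shift and cannot alter the orthogonal covariance, verify under $\bm\mu_i\cdot\bm r=0$ and $\mathbb{E}[Y_T^{REF}]=\bm 0$ that the maximizer of $F_\eta$ genuinely has no orthogonal component, and then invoke the exact marginal preservation of gDDIM for the uncontrolled orthogonal dynamics. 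The parallel direction, by contrast, is a verbatim replay of the Gaussian analysis once the decoupling is in place.
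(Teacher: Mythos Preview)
Your proposal is correct and follows essentially the same approach as the paper: both proofs orthogonally decompose along $\bm r$, split the quadratic reward into $\parallel$ and $\perp$ parts, argue that the optimal control has zero $\perp$-component so the orthogonal contributions cancel, and then reduce the $\parallel$-direction to the one-dimensional Gaussian case of Theorems~\ref{ve-t1} and~\ref{vp-t2}. Your justification of the orthogonal cancellation via marginal preservation of the uncontrolled gDDIM dynamics is in fact more explicit than the paper's sketch.
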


For all these examples,  
the reward gap $\Delta_\eta \downarrow 0$ (independent of $\eta$), as $T \to \infty$.
Such a phenomenon will also be observed in fine-tuning T2I models with more complex rewards.

\section{Non-vacuous bound on the reward gap}
\label{sc4}

In Section \ref{sc3}, our analysis relies on the explicit computation of score functions with simple controls,
which is not available for fine-tuning in general. 
Here our goal is to bound the Wasserstein-2 distance between
$p_{\text{data}}^{\text{SDE}}$ and $p_{\text{data}}^{\text{ODE}}$ with mild assumptions before applying it to a $L_r$-Lipschitz reward $r(\cdot)$, i.e. $|r(y_1)-r(y_2)|\leq L_r\cdot||y_1-y_2||$.

\begin{assumption} \label{assump}
The following conditions hold for all $y, y_1, y_2$ along inference trajectories,
\begin{enumerate}
\item 
Lipschitz of $s_\theta$: There exists $L>0$ such that $||s_\theta(t,y_1) - s_\theta(t,y_2)||\leq L||y_1-y_2||$.
\item 
$L^2$ bound on $s_\theta$: There exists $A > 0$ such that
$\sup_{0 \le t \le T} \mathbb{E}[||s_\theta(t,y)||^2] \le A$.
\end{enumerate}
\end{assumption}

Condition 1 (Lipschitz of scores) and 2 ($L^2$ bound) are standard in stability analysis. Also, there always exists a continuous-time VP/VE SDE on $[0,1]$ whose discretization \citep{ddim} matches the large-scale T2I models' (i.e. Stable Diffusion and SDXL) training schedule. In addition, models (FLUX) based on rectified flow inherently uses a normalized time interval. Therefore, for large $\eta$, the following theorem
yields $O(\eta^2)$ bound on $W_2$ distance,
see proof in Appendix \ref{ad-4-1}.

\begin{table}[t]
\centering
\setlength{\tabcolsep}{6pt}\renewcommand{\arraystretch}{0.6}
\begin{tabular}{l c cccc}
\toprule
 & $\eta$ & \textbf{ImageReward} & \textbf{PickScore} & \textbf{HPS\_v2} & \textbf{Aesthetic} \\
\midrule
\multirow{1}{*}{\makecell[l]{\textbf{Base}}}
  & -- & 0.32 & 20.69 & 0.253 & 5.38\\
\midrule
\multirow{3}{*}{\makecell[l]{\textbf{ImageReward}}}
  & 1.0 & 0.84\,(0.92) & 20.89 \,(20.94) & \textbf{0.268}\,(\textbf{0.271}) & 5.64\,(5.72)\\
  & 1.2 & \textbf{0.92}\,(\textbf{1.03}) & \textbf{20.95}\, (\textbf{20.99}) & \textbf{0.268}\,(0.289) & \textbf{5.70}\,(\textbf{5.80})\\
  & 1.5 & 0.77\,(0.91) & 20.90\,(20.95) & 0.266\,(0.269) & 5.61\,(5.74)\\
\midrule
\multirow{3}{*}{\textbf{PickScore}}
  & 1.0 & 0.59\,(\textbf{0.73}) & 21.11\,(21.28) & \textbf{0.265}\,(\textbf{0.267}) & 5.56\,(5.68)\\
  & 1.2 & \textbf{0.59}\,(0.69) & \textbf{21.17}\,(21.33) & 0.264\,(0.264) & 5.57\,(5.69)\\
  & 1.5 & 0.57\,(0.70) & 21.10\,(\textbf{21.36}) & 0.264\,(0.262) & \textbf{5.60}\,(\textbf{5.77})\\
\bottomrule
\end{tabular}
\caption{Performance for ODE (SDE) samplers under DDPO fine-tuning. Bold numbers indicate the highest evaluations among all stochasticity, demonstrating $\eta = 1.2$ in general performs the best.}
\label{train}
\end{table}

\begin{theorem}\label{t4}
If Assumption \ref{assump} hold with $T = 1$,
$W_2(Y_T^{\text{ODE}}, Y_T^\text{SDE})\leq
C\cdot\max\{\eta,\eta^2\}\cdot\sqrt{1+A}$, where $C$ only depends on $||g||_\infty$ and $L$.
\end{theorem}

\begin{proposition}
Let Assumption \ref{assump} hold with $L_r$-Lipschitz rewards $r$, $\Delta_\eta\leq L_r\cdot C\cdot\max\{\eta,\eta^2\}\cdot\sqrt{1+A}$, where $C$ only depends on $||g||_\infty$ and $L$.
\end{proposition}

An improved $O(\eta)$ bound relies on contractive coefficients (see Appendix \ref{ad-4-2}). This contractivity can arise from a contractive drift $f$ \citep{TZ24}. Alternatively, even though the backward dynamics of classical VP models are expansive, a strongly log-concave terminal distribution \citep{GNZ23} still satisfies the contractive condition (see Assumption \ref{a4-2}), which can hold if the conditional terminal distribution for a fixed prompt $\bm c$ is approximately unimodal.

Quadratic and linear growth rate on $\eta$ is consistent with our empirical observation: T2I fine-tuning preserves quality for ODE inference; rewards only deteriorate under very large $\eta$ (see Table \ref{prompt_train}).

\begin{remark}
Discretization errors can be incorporated into the $W_2$ bound as an additional term
$\epsilon_{d}(h)$ depending on the time–step size $h$ \citep{gaussianmixture}, with
$\epsilon_{d}(h)\to 0$ as $h\to 0$. 
Under Assumption \ref{assump}, Euler–Maruyama yields such a vanishing term; see
Appendix~\ref{ad-4-3} for details.
\end{remark}

\section{Numerical experiments}\label{exps}

In this section, we fine-tune
large-scale T2I models under large $\eta$
and examine its reward gap $\Delta_\eta$ with RLHF algorithms
DDPO and MixGRPO (see Section \ref{difrlhf}).
Preference rewards for fine-tuning DDPO include
the LAION aesthetic \citep{aesthetic}, HPS\_v2.1 \citep{hpsv2}, PickScore \citep{pickscore}, and ImageReward \citep{imagereward}. The rewards for fine-tuning MixGRPO include HPS CLIP, PickScore, ImageReward, and Unified Reward \citep{unifiedreward}.

According to \eqref{eq:gddim} and \eqref{eq:sigmaeta}, 
we use a high stochasticity $\eta\geq 1.0$ under \textbf{gDDIM scheme} (see Section \ref{d-gddim} and Appendix \ref{a-ddim}) to generate robust training samples $\{Y_T^{\text{SDE}}\}$
and compare them with deterministic inference samples $\{Y_T^{\text{ODE}}\}$ following \eqref{3.1.1}. The stochasticity scale $\eta$ controls the noise level of backward dynamics.
\begin{remark}
In each experiment, the training stochasticity $\eta$ and the number of training steps $N$ are always specified. Choices of other hyperparameters are detailed in Appendix \ref{hyperparameters}.
\end{remark}

\subsection{DDPO}
We use Stable Diffusion v1.5 \citep{stabledif} as the base model,
and fine-tune it with DDPO. During training,
we adopt ImageReward and PickScore as preference rewards, 
while Aesthetic and HPS\_v2 are included as additional evaluation metrics. 
Figure \ref{fig:2x12_nospace} (and Appendix \ref{a-f}) provides representative generations from the fine-tuned models.
Our main observations are summarized as follows:

$\bullet$ \textbf{High Stochasticity Benefits Moderate Training Steps.} We compare fine-tuning under ImageReward and PickScore at $\eta \in \{1.0, 1.2, 1.5\}$ for $N = 200$ steps. As shown in Table \ref{train}, $\eta=1.2$ under ImageReward achieves the best in-domain and out-of-domain performance, while PickScore's performances depend on evaluation metrics.

$\bullet$ \textbf{Decreasing Reward Gap with Quality Improvement.} To study the reward gaps for smaller or larger $N$, we experiment under PickScore with $\eta=1.2$ and calculate SDE–ODE reward differences (here we report the SDE over ODE performance) under multiple preference functions every 200 steps until reward collapse. As shown in Table \ref{long_train}, the gap decreases as image quality improves for both samplers.

$\bullet$ \textbf{Richer Prompt Contents Reduce Reward Gap.} We compare performances with animal versus more comprehensive prompts (see Appendix \ref{a-g6}) under ImageReward with $\eta=1.2$. As shown in Table \ref{prompt_train}, complex prompts generate higher post-tuning rewards and higher in-group variance in post-training. Moreover, their richer instructions reduce the SDE–ODE  reward gap.

\begin{figure}[bt]
  \centering
  \makebox[\linewidth]{\includegraphics[width=\linewidth]{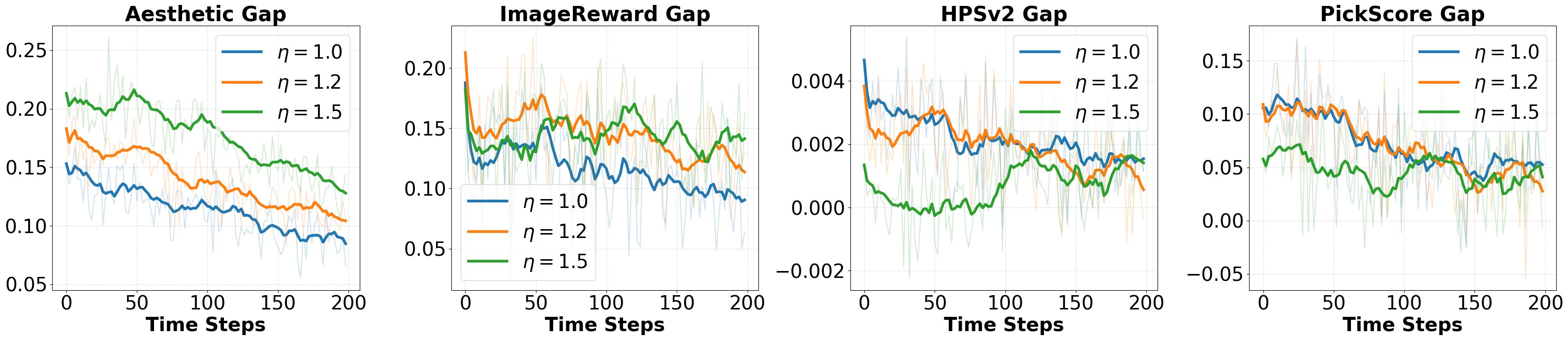}}
  \caption{Evolution of reward gap (the reward difference between SDE sampling and ODE sampling) during DDPO training under PickScore fine-tuning with stochasticity $\eta \in\{ 1.0, 1.2, 1.5\}$. The gap for multiple rewards remains small as training step $N$ progresses.}
  \label{ddpo_curve}
\end{figure}

\begin{table}[ht]
\centering
\setlength{\tabcolsep}{2pt}
\renewcommand{\arraystretch}{0.68}

\begin{tabular}{@{}>{\raggedleft\arraybackslash}m{0.18\textwidth} *{9}{c}@{}}
\toprule
  & \textbf{$N = \bm 0$} & \textbf{200} & \textbf{400} & \textbf{600} &
    \textbf{800} & \textbf{1000} & \textbf{1200} & \textbf{1400} & \textbf{(1476)} \\
\midrule
  \textbf{ImgRwrd Gap} & 0.160 & 0.119 & 0.102 & 0.028 & 0.057 & 0.027 & \textbf{0.006} & 0.020 & (0.564) \\
  \textbf{HPSv2 Gap}   & 0.0047 & 0.0032 & 0.0032 & 0.0018 & 0.0027 & 0.0015 & \textbf{-0.0028} & 0.0011 & (0.0308) \\
  \textbf{Aesthetic Gap}  & 0.162 & 0.113 & 0.078 & 0.077 & 0.072 & 0.017 & 0.030 & \textbf{-0.010} & (0.685) \\
  \textbf{PickScore Gap}           & 0.115 & 0.146 & 0.167 & 0.118 & 0.178 & 0.106 & 0.129 & \textbf{0.090} & (0.907) \\
\midrule
  \textbf{SDE Reward}           & 20.82 & 21.31 & 21.65 & 21.90 & 22.07 & 22.27 & 22.31 & \textbf{22.42}& (18.92) \\
  \textbf{ODE Reward}           & 20.73 & 21.17 & 21.50 & 21.78 & 21.88 & 22.17 & 22.20 & \textbf{22.32} & (17.04)\\
\bottomrule
\end{tabular}

\caption{PickScore training with $\eta=1.2$ until reward collapses. Smallest reward gaps and best sampler performances locate at the large training steps $N = 1200, 1400$.}
\label{long_train}
\end{table}

\begin{table}[ht]
\centering
\setlength{\tabcolsep}{3pt} 

\begin{tabular}{@{}l @{\hspace{4.5pt}} c @{\hspace{4.5pt}} c @{\hspace{4.5pt}} c
                !{\vrule width 1pt}
                @{\hspace{4.5pt}} c @{\hspace{4.5pt}} c @{\hspace{4.5pt}} c @{}}
\toprule
& \multicolumn{3}{c}{\textbf{Animal Prompts}} & \multicolumn{3}{c}{\textbf{Comprehensive Prompts}} \\
& \textbf{$N=\bm0$} & \textbf{100} & \textbf{200} & \textbf{$N=\bm0$} & \textbf{100} & \textbf{200} \\
\midrule
\textbf{Mean}  & \textbf{0.38}\,(\textbf{0.540}) & 0.67\,(0.81) & 0.92\,(1.03) & 0.35\,(0.47) & \textbf{0.76}\,(\textbf{0.87}) & \textbf{1.09}\,(\textbf{1.17}) \\
\textbf{Std} & 0.81\,(0.80) & 0.78\,(0.72) & 0.71\,(0.65) & \textbf{1.03}\,(\textbf{1.00}) & \textbf{0.91}\,(\textbf{0.87}) & \textbf{0.80}\,(\textbf{0.73}) \\
\textbf{$\Delta_{\eta=1.2}$}  & \textbf{0.15} & \textbf{0.14} & \textbf{0.12} & 0.12 & 0.11 & 0.11 \\
\bottomrule
\end{tabular}
\caption{Performance Comparison between prompts of different complexity with $\eta = 1.2$. More complicated prompts yields faster fine-tuning improvements, larger in-group variances, and smaller SDE-ODE reward gaps.} 
\label{prompt_train}
\end{table}

\subsection{MixGRPO}
We use FLUX.1 \citep{flux} as the base model,
and fine-tune it with MixGRPO, 
which is a sliding-window sampler that alternates between ODE and SDE schemes for 25 training steps in total.
Training is carried out with multiple rewards combined using equal weights, while evaluation is reported on ImageReward and HPSClip.

$\bullet$ \textbf{Bounded Reward Gap Under High Stochasticity.} As shown in Figure \ref{Mix_curve}, $\Delta_\eta$ converges to zero for both HPS Clip and ImageReward. Also, with $\eta = 1.2$ and ImageReward metric, ODE sampling in MixGRPO consistently outperforms the mixed SDE–ODE scheme, in contrast to the results from DDPO. 

\begin{figure}[ht]
  \centering
  \makebox[\linewidth]{\includegraphics[width=\linewidth]{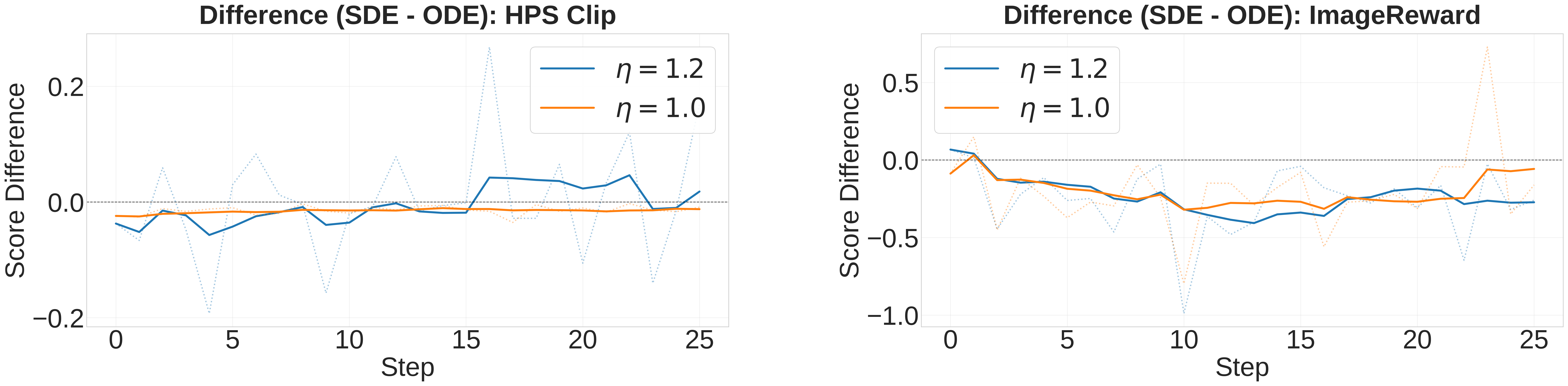}}
  \caption{Bounded reward gap for MixGRPO, displayed with 7-step moving average. Here a positive score difference means the SDE component achieves higher reward than the ODE component.}
  \label{Mix_curve}
\end{figure}


$\bullet$ \textbf{T2I Quality Improvement.} The upper (evaluated with ImageReward) and lower panels (evaluated with HPS Clip) of Figure \ref{Mix_curve2} further show that samplers trained with larger stochasticity ($\eta=1.2$) perform consistently better than standard DDIM stochasticity ($\eta=1.0$) on training prompts. For example, in Figure \ref{Mix_fig}, the generation with $\eta=1.2$ correctly aligns with the “trapped inside” prompt instruction, whereas the generation with $\eta=1.0$ fails to do so.

\begin{figure}[ht]
  \centering
  \makebox[\linewidth]{\includegraphics[width=\linewidth]{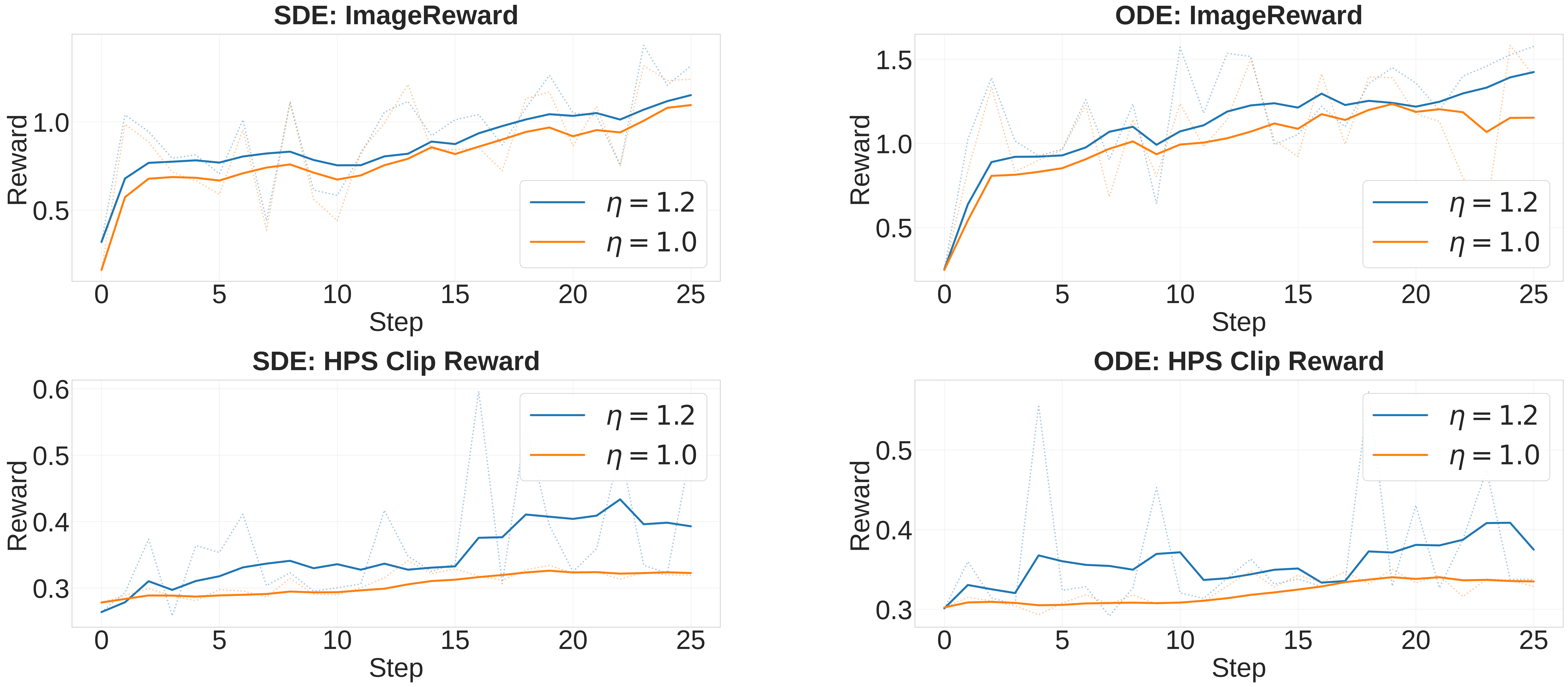}}
  \caption{Performance improvement for MixGRPO, displayed with 7-step moving average.}
  \label{Mix_curve2}
\end{figure}


\begin{figure}[ht]
  \centering
  \resizebox{0.90\linewidth}{!}{%
    \begin{minipage}{\linewidth}
      \begingroup
        \setlength{\tabcolsep}{0pt}%
        \renewcommand{\arraystretch}{0}%

        \begin{methodbox}{Noise Level = 1.0}{SDEblue}
          \makebox[\linewidth][c]{%
            \begin{tabular}{@{}c c c@{}}
              \includegraphics[width=.33\linewidth]{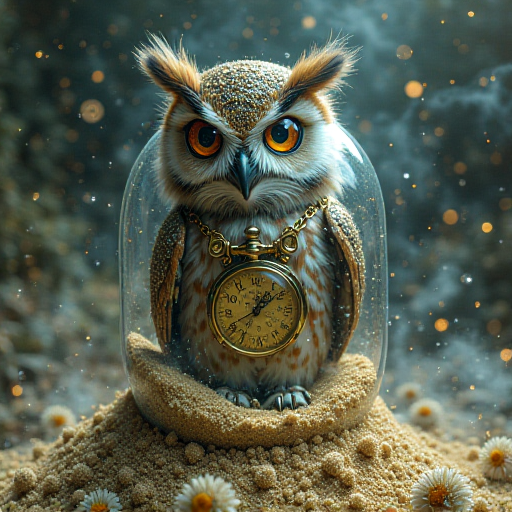} &
              \includegraphics[width=.33\linewidth]{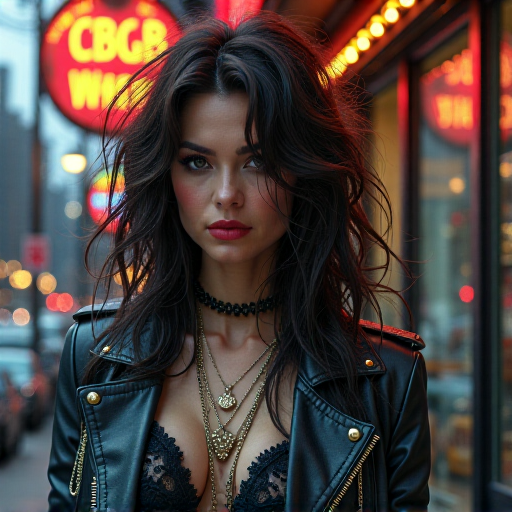} &
              \includegraphics[width=.33\linewidth]{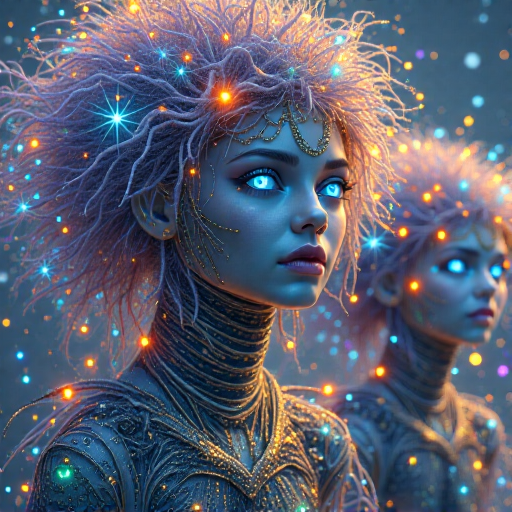} \\
            \end{tabular}%
          }
        \end{methodbox}

        \vspace{-1pt} 

        \begin{methodbox}{Noise Level = 1.2}{ODEgreen}
          \makebox[\linewidth][c]{%
            \begin{tabular}{@{}c c c@{}}
              \includegraphics[width=.33\linewidth]{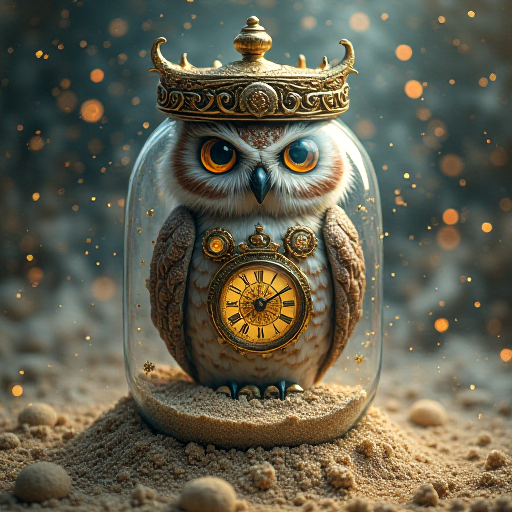} &
              \includegraphics[width=.33\linewidth]{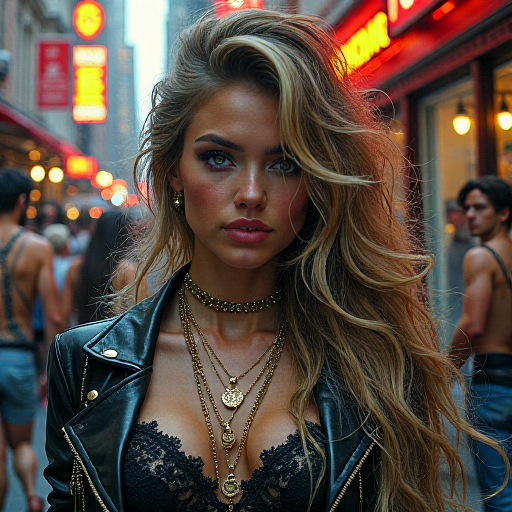} &
              \includegraphics[width=.33\linewidth]{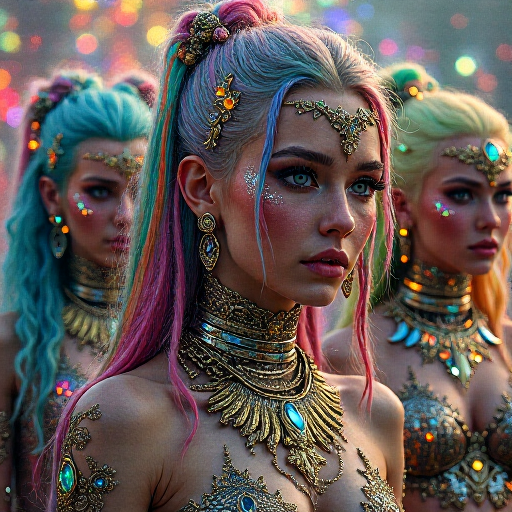} \\
            \end{tabular}%
          }
        \end{methodbox}

      \endgroup
    \end{minipage}%
  }

  \caption{Comparison of ODE image generation by FLUX with MixGRPO fine-tuning, stochasticity $\eta=1.2$ (below) and $\eta=1.0$ (above). Higher stochasticity shows better alignments to details.\\ Prompts (from left to right): ``A steampunk pocketwatch owl is trapped inside a glass jar buried in sand, surrounded by an hourglass and swirling mist.'', ``An androgynous glam rocker poses outside CBGB in the style of Phil Hale.'', ``A digital painting by Loish featuring a rush of half-body, cyberpunk androids and cyborgs adorned with intricate jewelry and colorful holographic dreads.''}
  \label{Mix_fig}
\end{figure}

\section{Conclusion and Further Directions}\label{cc}
This work clarifies the tension between stochastic SDE training and deterministic ODE inference in diffusion RLHF. By proving a bounded reward gap $\Delta_\eta$ and empirically showing that higher training stochasticity (e.g., $\eta=1.2$) improves deterministic image quality, we provide theoretical support for ``training with SDE, inference with ODE''. Future work includes quantifying how distribution shift and the choice of reward function separately affect reward gaps.

\newpage

\subsubsection*{Acknowledgments}

Wenpin Tang and Jiayuan Sheng are supported by NSF grant DMS-2206038.
Wenpin Tang acknowledges support by the Columbia Innovation Hub grant, and the Tang Family
Assistant Professorship. 
The works of Haoxian Chen, Jiayuan Sheng,
Hanyang Zhao, and David Yao are part of a ColumbiaCityU/HK collaborative project that is supported by InnoHK
Initiative, The Government of the HKSAR and the AIFT Lab.

\bibliography{iclr2025_conference}

\begin{thebibliography}{64}
\providecommand{\natexlab}[1]{#1}
\providecommand{\url}[1]{\texttt{#1}}
\expandafter\ifx\csname urlstyle\endcsname\relax
  \providecommand{\doi}[1]{doi: #1}\else
  \providecommand{\doi}{doi: \begingroup \urlstyle{rm}\Url}\fi

\bibitem[Anderson(1982)]{Ander82}
Brian D.~O. Anderson.
\newblock Reverse-time diffusion equation models.
\newblock \emph{Stochastic Process. Appl.}, 12\penalty0 (3):\penalty0 313--326, 1982.

\bibitem[Bai et~al.(2022)Bai, Jones, Ndousse, Askell, Chen, DasSarma, Drain, Fort, Ganguli, Henighan, and et~al.]{Bai22}
Yuntao Bai, Andy Jones, Kamal Ndousse, Amanda Askell, Anna Chen, Nova DasSarma, Dawn Drain, Stanislav Fort, Deep Ganguli, Tom Henighan, and et~al.
\newblock Training a helpful and harmless assistant with reinforcement learning from human feedback.
\newblock \emph{arXiv preprint arXiv:2204.05862}, 2022.

\bibitem[Black et~al.(2024)Black, Janner, Du, Kostrikov, and Levine]{ddpo}
Kevin Black, Michael Janner, Yilun Du, Ilya Kostrikov, and Sergey Levine.
\newblock Training diffusion models with reinforcement learning.
\newblock In \emph{ICLR}, 2024.

\bibitem[{Black Forest Labs}(2024)]{flux}
{Black Forest Labs}.
\newblock Flux.
\newblock \emph{GitHub repository}, 2024.
\newblock URL \url{https://github.com/black-forest-labs/flux}.

\bibitem[Chen et~al.(2023{\natexlab{a}})Chen, Lee, and Lu]{chen202}
Hongrui Chen, Holden Lee, and Jianfeng Lu.
\newblock Improved analysis of score-based generative modeling: User-friendly bounds under minimal smoothness assumptions.
\newblock In \emph{Proceedings of the 40th International Conference on Machine Learning}, 2023{\natexlab{a}}.

\bibitem[Chen et~al.(2023{\natexlab{b}})Chen, Chewi, Li, Li, Salim, and Zhang]{chen2023sampling}
Sitan Chen, Sinho Chewi, Jerry Li, Yuanzhi Li, Adil Salim, and Anru~R. Zhang.
\newblock Sampling is as easy as learning the score: Theory for diffusion models with minimal data assumptions.
\newblock In \emph{International Conference on Learning Representations (ICLR)}, 2023{\natexlab{b}}.

\bibitem[Choi et~al.(2025)Choi, Kang, and Han]{choi2025enhanced}
Jinyoung Choi, Junoh Kang, and Bohyung Han.
\newblock Enhanced diffusion sampling via extrapolation with multiple ode solutions.
\newblock \emph{arXiv preprint arXiv:2504.01855}, 2025.

\bibitem[Clark et~al.(2023)Clark, Vicol, Swersky, and Fleet]{clark2023directly}
Kevin Clark, Paul Vicol, Kevin Swersky, and David~J Fleet.
\newblock Directly fine-tuning diffusion models on differentiable rewards.
\newblock \emph{arXiv preprint arXiv:2309.17400}, 2023.

\bibitem[Deng et~al.(2009)Deng, Dong, Socher, Li, Li, and Fei-Fei]{imagenet}
Jia Deng, Wei Dong, Richard Socher, Li-Jia Li, Kai Li, and Li~Fei-Fei.
\newblock {ImageNet}: A large-scale hierarchical image database.
\newblock In \emph{Proceedings of the IEEE Conference on Computer Vision and Pattern Recognition (CVPR)}, pp.\  248--255. IEEE, 2009.
\newblock \doi{10.1109/CVPR.2009.5206848}.

\bibitem[Domingo-Enrich et~al.(2024)Domingo-Enrich, Drozdzal, Karrer, and Chen]{domingo2024adjoint}
Carles Domingo-Enrich, Michal Drozdzal, Brian Karrer, and Ricky~TQ Chen.
\newblock Adjoint matching: Fine-tuning flow and diffusion generative models with memoryless stochastic optimal control.
\newblock \emph{arXiv preprint arXiv:2409.08861}, 2024.

\bibitem[Evans(2010)]{evans}
Lawrence~C. Evans.
\newblock \emph{Partial Differential Equations}, volume~19 of \emph{Graduate Studies in Mathematics}.
\newblock American Mathematical Society, Providence, RI, 2 edition, 2010.

\bibitem[Fan et~al.(2023)Fan, Watkins, Du, Liu, Ryu, Boutilier, Abbeel, Ghavamzadeh, Lee, and Lee]{dpok}
Ying Fan, Olivia Watkins, Yuqing Du, Hao Liu, Moonkyung Ryu, Craig Boutilier, Pieter Abbeel, Mohammad Ghavamzadeh, Kangwook Lee, and Kimin Lee.
\newblock D{P}{O}{K}: Reinforcement learning for fine-tuning text-to-image diffusion models.
\newblock In \emph{Neurips}, volume~36, pp.\  79858--79885, 2023.

\bibitem[Gao et~al.(2024)Gao, Zha, and Zhou]{Gao24}
Xuefeng Gao, Jiale Zha, and Xun~Yu Zhou.
\newblock Reward-directed score-based diffusion models via q-learning.
\newblock \emph{arXiv preprint arXiv:2409.04832}, 2024.

\bibitem[Gao et~al.(2025)Gao, Nguyen, and Zhu]{GNZ23}
Xuefeng Gao, Hoang~M Nguyen, and Lingjiong Zhu.
\newblock Wasserstein convergence guarantees for a general class of score-based generative models.
\newblock \emph{J. Mach. Learn. Res.}, 26\penalty0 (43):\penalty0 1--54, 2025.

\bibitem[Han et~al.(2025)Han, Razaviyayn, and Xu]{han2025stochastic}
Yinbin Han, Meisam Razaviyayn, and Renyuan Xu.
\newblock Stochastic control for fine-tuning diffusion models: Optimality, regularity, and convergence.
\newblock In \emph{Proceedings of the 42nd International Conference on Machine Learning}, 2025.

\bibitem[Haussmann \& Pardoux(1986)Haussmann and Pardoux]{HP86}
U.~G. Haussmann and \'{E}. Pardoux.
\newblock Time reversal of diffusions.
\newblock \emph{Ann. Probab.}, 14\penalty0 (4):\penalty0 1188--1205, 1986.

\bibitem[Hershey \& Olsen(2007)Hershey and Olsen]{4218101}
John~R. Hershey and Peder~A. Olsen.
\newblock Approximating the kullback leibler divergence between gaussian mixture models.
\newblock In \emph{2007 IEEE International Conference on Acoustics, Speech and Signal Processing - ICASSP '07}, volume~4, pp.\  IV--317--IV--320, 2007.
\newblock \doi{10.1109/ICASSP.2007.366913}.

\bibitem[Ho \& Salimans(2021)Ho and Salimans]{cfg}
Jonathan Ho and Tim Salimans.
\newblock Classifier-free diffusion guidance.
\newblock In \emph{NeurIPS 2021 Workshop on Deep Generative Models and Downstream Applications}, 2021.

\bibitem[Ho et~al.(2020)Ho, Jain, and Abbeel]{ddpm}
Jonathan Ho, Ajay Jain, and Pieter Abbeel.
\newblock Denoising diffusion probabilistic models.
\newblock In \emph{Advances in Neural Information Processing Systems (NeurIPS)}, 2020.

\bibitem[Ho et~al.(2022)Ho, Salimans, Gritsenko, Chan, Norouzi, and Fleet]{video}
Jonathan Ho, Tim Salimans, Alexey Gritsenko, William Chan, Mohammad Norouzi, and David~J. Fleet.
\newblock Video diffusion models.
\newblock In \emph{Advances in Neural Information Processing Systems (NeurIPS)}, volume~35, 2022.

\bibitem[Huang et~al.(2025)Huang, Huang, and Lin]{huang2025fast}
Daniel~Zhengyu Huang, Jiaoyang Huang, and Zhengjiang Lin.
\newblock Fast convergence for high-order ode solvers in diffusion probabilistic models.
\newblock \emph{arXiv preprint arXiv:2506.13061}, 2025.

\bibitem[Kirstain et~al.(2023)Kirstain, Polyak, Singer, Matiana, Penna, and Levy]{pickscore}
Yuval Kirstain, Adam Polyak, Uriel Singer, Shahbuland Matiana, Joe Penna, and Omer Levy.
\newblock Pick-a-pic: An open dataset of user preferences for text-to-image generation.
\newblock In \emph{Advances in Neural Information Processing Systems (NeurIPS)}, 2023.

\bibitem[Lee et~al.(2023)Lee, Liu, Ryu, Watkins, Du, Boutilier, Abbeel, Ghavamzadeh, and Gu]{Lee23}
Kimin Lee, Hao Liu, Moonkyung Ryu, Olivia Watkins, Yuqing Du, Craig Boutilier, Pieter Abbeel, Mohammad Ghavamzadeh, and Shixiang~Shane Gu.
\newblock Aligning text-to-image models using human feedback.
\newblock \emph{arXiv preprint arXiv:2302.12192}, 2023.

\bibitem[Lee et~al.(2025)Lee, Li, Wang, He, Ke, Yang, Essa, Shin, Yang, and Li]{CMPO}
Kyungmin Lee, Xiaohang Li, Qifei Wang, Junfeng He, Junjie Ke, Ming-Hsuan Yang, Irfan Essa, Jinwoo Shin, Feng Yang, and Yinxiao Li.
\newblock Calibrated multi-preference optimization for aligning diffusion models.
\newblock In \emph{Proceedings of the IEEE/CVF Conference on Computer Vision and Pattern Recognition (CVPR)}, 2025.

\bibitem[Li et~al.(2025)Li, Cui, Huang, Ma, Fan, Yang, and Zhong]{MixGRPO}
Junzhe Li, Yutao Cui, Tao Huang, Yinping Ma, Chun Fan, Miles Yang, and Zhao Zhong.
\newblock Mix{G}{R}{P}{O}: Unlocking flow-based {G}{R}{P}{O} efficiency with mixed {O}{D}{E}-{S}{D}{E}.
\newblock \emph{arXiv preprint arXiv:2507.21802}, 2025.

\bibitem[Liang et~al.(2025)Liang, Sha, Shi, Song, Wan, and Zhou]{gaussianmixture}
Yingyu Liang, Zhizhou Sha, Zhenmei Shi, Zhao Song, Mingda Wan, and Yufa Zhou.
\newblock Unraveling the smoothness properties of diffusion models: A gaussian mixture perspective.
\newblock In \emph{Proceedings of the IEEE/CVF International Conference on Computer Vision (ICCV)}, 2025.

\bibitem[Liu et~al.(2023)Liu, Chen, Yuan, Mei, Liu, Mandic, Wang, and Plumbley]{audio}
Haohe Liu, Zehua Chen, Yi~Yuan, Xinhao Mei, Xubo Liu, Danilo Mandic, Wenwu Wang, and Mark~D Plumbley.
\newblock {A}udio{LDM}: Text-to-audio generation with latent diffusion models.
\newblock In Andreas Krause, Emma Brunskill, Kyunghyun Cho, Barbara Engelhardt, Sivan Sabato, and Jonathan Scarlett (eds.), \emph{Proceedings of the 40th International Conference on Machine Learning}, volume 202 of \emph{Proceedings of Machine Learning Research}, pp.\  21450--21474. PMLR, 23--29 Jul 2023.

\bibitem[Liu et~al.(2025)Liu, Liu, Liang, Li, Liu, Wang, Wan, Zhang, and Ouyang]{FlowGRPO}
Jie Liu, Gongye Liu, Jiajun Liang, Yangguang Li, Jiaheng Liu, Xintao Wang, Pengfei Wan, Di~Zhang, and Wanli Ouyang.
\newblock Flow-grpo: Training flow matching models via online rl.
\newblock \emph{arXiv preprint arXiv:2505.05470}, 2025.

\bibitem[Loshchilov \& Hutter(2019)Loshchilov and Hutter]{adamw}
Ilya Loshchilov and Frank Hutter.
\newblock Decoupled weight decay regularization.
\newblock In \emph{International Conference on Learning Representations (ICLR)}, 2019.

\bibitem[Lu et~al.(2022)Lu, Zhou, Bao, Chen, Li, and Zhu]{lu2022dpm}
Cheng Lu, Yuhao Zhou, Fan Bao, Jianfei Chen, Chongxuan Li, and Jun Zhu.
\newblock Dpm-solver: A fast ode solver for diffusion probabilistic model sampling in around 10 steps.
\newblock \emph{Advances in neural information processing systems}, 35:\penalty0 5775--5787, 2022.

\bibitem[Lu et~al.(2025)Lu, Zhou, Bao, Chen, Li, and Zhu]{lu2025dpm}
Cheng Lu, Yuhao Zhou, Fan Bao, Jianfei Chen, Chongxuan Li, and Jun Zhu.
\newblock Dpm-solver++: Fast solver for guided sampling of diffusion probabilistic models.
\newblock \emph{Machine Intelligence Research}, pp.\  1--22, 2025.

\bibitem[Mohamed et~al.(2020)Mohamed, Rosca, Figurnov, and Mnih]{montecarlo}
Shakir Mohamed, Mihaela Rosca, Michael Figurnov, and Andriy Mnih.
\newblock Monte carlo gradient estimation in machine learning.
\newblock \emph{Journal of Machine Learning Research}, 21\penalty0 (132):\penalty0 1--62, 2020.

\bibitem[Ouyang et~al.(2022)Ouyang, Wu, Jiang, Almeida, Wainwright, Mishkin, Zhang, Agarwal, Slama, and Ray]{}
Long Ouyang, Jeffrey Wu, Xu~Jiang, Diogo Almeida, Carroll Wainwright, Pamela Mishkin, Chong Zhang, Sandhini Agarwal, Katarina Slama, and Alex Ray.
\newblock Training language models to follow instructions with human feedback.
\newblock In \emph{Neurips}, volume~35, pp.\  27730--27744, 2022.

\bibitem[Podell et~al.(2024)Podell, English, Lacey, Blattmann, Dockhorn, M{\"u}ller, Penna, and Rombach]{sdxl}
Dustin Podell, Zion English, Kyle Lacey, Andreas Blattmann, Tim Dockhorn, Jonas M{\"u}ller, Joe Penna, and Robin Rombach.
\newblock Sdxl: Improving latent diffusion models for high-resolution image synthesis.
\newblock In \emph{International Conference on Learning Representations (ICLR)}, 2024.

\bibitem[Risken(1996)]{fp}
Hannes Risken.
\newblock \emph{Fokker-Planck Equation}, pp.\  63--95.
\newblock Springer Berlin Heidelberg, Berlin, Heidelberg, 1996.
\newblock ISBN 978-3-642-61544-3.
\newblock \doi{10.1007/978-3-642-61544-3_4}.

\bibitem[Rombach et~al.(2022)Rombach, Blattmann, Lorenz, Esser, and Ommer]{stabledif}
Robin Rombach, Andreas Blattmann, Dominik Lorenz, Patrick Esser, and Bj{\"o}rn Ommer.
\newblock High-resolution image synthesis with latent diffusion models.
\newblock In \emph{Proceedings of the IEEE/CVF Conference on Computer Vision and Pattern Recognition (CVPR)}, 2022.

\bibitem[Schuhmann et~al.(2022)Schuhmann, Beaumont, Vencu, Gordon, Wightman, Cherti, Coombes, Katta, Mullis, Wortsman, Schramowski, Kundurthy, Crowson, Schmidt, Kaczmarczyk, and Jitsev]{aesthetic}
Christoph Schuhmann, Romain Beaumont, Richard Vencu, Cade Gordon, Ross Wightman, Mehdi Cherti, Theo Coombes, Aarush Katta, Clayton Mullis, Mitchell Wortsman, Patrick Schramowski, Srivatsa Kundurthy, Katherine Crowson, Ludwig Schmidt, Robert Kaczmarczyk, and Jenia Jitsev.
\newblock Laion-5b: An open large-scale dataset for training next generation image-text models.
\newblock In \emph{Neurips}, volume~35, pp.\  25278--25294, 2022.

\bibitem[Schulman et~al.(2017)Schulman, Wolski, Dhariwal, Radford, and Klimov]{ppo}
John Schulman, Filip Wolski, Prafulla Dhariwal, Alec Radford, and Oleg Klimov.
\newblock Proximal policy optimization algorithms.
\newblock \emph{arXiv preprint arXiv:1707.06347}, 2017.

\bibitem[Shao et~al.(2024)Shao, Wang, Zhu, Xu, Song, Bi, Zhang, Zhang, Li, Wu, and Guo]{grpo}
Zhihong Shao, Peiyi Wang, Qihao Zhu, Runxin Xu, Junxiao Song, Xiao Bi, Haowei Zhang, Mingchuan Zhang, Y.~K. Li, Y.~Wu, and Daya Guo.
\newblock Deepseekmath: Pushing the limits of mathematical reasoning in open language models.
\newblock \emph{arXiv preprint arXiv:2402.03300}, 2024.

\bibitem[Skalse et~al.(2022)Skalse, Howe, Krasheninnikov, and Krueger]{rewardhack}
Joar Skalse, Nikolaus H.~R. Howe, Dmitrii Krasheninnikov, and David Krueger.
\newblock Defining and characterizing reward hacking.
\newblock \emph{arXiv preprint arXiv:2209.13085}, 2022.

\bibitem[Song et~al.(2021{\natexlab{a}})Song, Meng, and Ermon]{ddim}
Jiaming Song, Chenlin Meng, and Stefano Ermon.
\newblock Denoising diffusion implicit models.
\newblock In \emph{ICLR}, 2021{\natexlab{a}}.

\bibitem[Song et~al.(2021{\natexlab{b}})Song, Sohl-Dickstein, Kingma, Kumar, Ermon, and Poole]{Song20}
Yang Song, Jascha Sohl-Dickstein, Diederik~P Kingma, Abhishek Kumar, Stefano Ermon, and Ben Poole.
\newblock Score-based generative modeling through stochastic differential equations.
\newblock In \emph{ICLR}, 2021{\natexlab{b}}.

\bibitem[Sorokin et~al.(2025)Sorokin, Nakhodnov, Kuznetsov, and Alanov]{sorokin2025imagerefl}
Dmitrii Sorokin, Maksim Nakhodnov, Andrey Kuznetsov, and Aibek Alanov.
\newblock Imagerefl: Balancing quality and diversity in human-aligned diffusion models.
\newblock \emph{arXiv preprint arXiv:2505.22569}, 2025.

\bibitem[Stroock \& Varadhan(1979)Stroock and Varadhan]{SV79}
Daniel~W. Stroock and S.~R.~Srinivasa Varadhan.
\newblock \emph{Multidimensional diffusion processes}, volume 233 of \emph{Grundlehren der Mathematischen Wissenschaften}.
\newblock Springer-Verlag, 1979.

\bibitem[Tang(2024)]{Tang24}
Wenpin Tang.
\newblock Fine-tuning of diffusion models via stochastic control: entropy regularization and beyond.
\newblock \emph{arXiv preprint arXiv:2403.06279}, 2024.

\bibitem[Tang \& Zhao(2024)Tang and Zhao]{TZ24}
Wenpin Tang and Hanyang Zhao.
\newblock Contractive diffusion probabilistic models.
\newblock \emph{arXiv preprint arXiv:2401.13115}, 2024.

\bibitem[Tang \& Zhao(2025)Tang and Zhao]{TZ24tut}
Wenpin Tang and Hanyang Zhao.
\newblock Score-based diffusion models via stochastic differential equations.
\newblock \emph{Stat. Surv.}, 19:\penalty0 28--64, 2025.

\bibitem[Uehara et~al.(2024)Uehara, Zhao, Black, Hajiramezanali, Scalia, Diamant, Tseng, Biancalani, and Levine]{UZ24}
Masatoshi Uehara, Yulai Zhao, Kevin Black, Ehsan Hajiramezanali, Gabriele Scalia, Nathaniel~Lee Diamant, Alex~M Tseng, Tommaso Biancalani, and Sergey Levine.
\newblock Fine-tuning of continuous-time diffusion models as entropy-regularized control.
\newblock \emph{arXiv preprint arXiv:2402.15194}, 2024.

\bibitem[Vincent(2011)]{Vi11}
Pascal Vincent.
\newblock A connection between score matching and denoising autoencoders.
\newblock \emph{Neural Comput.}, 23\penalty0 (7):\penalty0 1661--1674, 2011.

\bibitem[Wang et~al.(2025)Wang, Zang, Li, Jin, and Wang]{unifiedreward}
Yibin Wang, Yuhang Zang, Hao Li, Cheng Jin, and Jiaqi Wang.
\newblock Unified reward model for multimodal understanding and generation.
\newblock \emph{arXiv preprint arXiv:2503.05236}, 2025.

\bibitem[Williams(1992)]{reinforce}
Ronald~J. Williams.
\newblock Simple statistical gradient-following algorithms for connectionist reinforcement learning.
\newblock \emph{Mach. Learn.}, 8\penalty0 (3-4):\penalty0 229--256, 1992.

\bibitem[Winata et~al.(2025)Winata, Zhao, Das, Tang, Yao, Zhang, and Sahu]{winata2025preference}
Genta~Indra Winata, Hanyang Zhao, Anirban Das, Wenpin Tang, David~D Yao, Shi-Xiong Zhang, and Sambit Sahu.
\newblock Preference tuning with human feedback on language, speech, and vision tasks: A survey.
\newblock \emph{J. Artif. Intell. Res.}, 82:\penalty0 2595--2661, 2025.

\bibitem[Wu et~al.(2023)Wu, Hao, Sun, Chen, Zhu, Zhao, and Li]{hpsv2}
Xiaoshi Wu, Yiming Hao, Keqiang Sun, Yixiong Chen, Feng Zhu, Rui Zhao, and Hongsheng Li.
\newblock Human preference score v2: A solid benchmark for evaluating human preferences of text-to-image synthesis.
\newblock \emph{arXiv preprint arXiv:2306.09341}, 2023.

\bibitem[Wu et~al.(2024)Wu, Chen, Li, Wang, and Wei]{gm}
Yuchen Wu, Minshuo Chen, Zihao Li, Mengdi Wang, and Yuting Wei.
\newblock Theoretical insights for diffusion guidance: A case study for {G}aussian mixture models.
\newblock In Ruslan Salakhutdinov, Zico Kolter, Katherine Heller, Adrian Weller, Nuria Oliver, Jonathan Scarlett, and Felix Berkenkamp (eds.), \emph{Proceedings of the 41st International Conference on Machine Learning}, volume 235 of \emph{Proceedings of Machine Learning Research}, pp.\  53291--53327. PMLR, 21--27 Jul 2024.

\bibitem[Xu et~al.(2023)Xu, Liu, Wu, Tong, Li, Ding, Tang, and Dong]{imagereward}
Jiazheng Xu, Xiao Liu, Yuchen Wu, Yuxuan Tong, Qinkai Li, Ming Ding, Jie Tang, and Yuxiao Dong.
\newblock Imagereward: Learning and evaluating human preferences for text-to-image generation.
\newblock In \emph{Neurips}, 2023.

\bibitem[Xue et~al.(2025)Xue, Wu, Gao, Kong, Zhu, Chen, Liu, Liu, Guo, Huang, and Luo]{DanceGRPO}
Zeyue Xue, Jie Wu, Yu~Gao, Fangyuan Kong, Lingting Zhu, Mengzhao Chen, Zhiheng Liu, Wei Liu, Qiushan Guo, Weilin Huang, and Ping Luo.
\newblock Dancegrpo: Unleashing {G}{R}{P}{O} on visual generation.
\newblock \emph{arXiv preprint arXiv:2505.07818}, 2025.

\bibitem[Yang et~al.(2024)Yang, Tao, Lyu, Ge, Chen, Shen, Zhu, and Li]{yang2024using}
Kai Yang, Jian Tao, Jiafei Lyu, Chunjiang Ge, Jiaxin Chen, Weihan Shen, Xiaolong Zhu, and Xiu Li.
\newblock Using human feedback to fine-tune diffusion models without any reward model.
\newblock In \emph{Proceedings of the IEEE/CVF Conference on Computer Vision and Pattern Recognition}, pp.\  8941--8951, 2024.

\bibitem[Zekri \& Boull{\'e}(2025)Zekri and Boull{\'e}]{zekri2025fine}
Oussama Zekri and Nicolas Boull{\'e}.
\newblock Fine-tuning discrete diffusion models with policy gradient methods.
\newblock \emph{arXiv preprint arXiv:2502.01384}, 2025.

\bibitem[Zhang \& Chen(2022)Zhang and Chen]{zhang2022fast}
Qinsheng Zhang and Yongxin Chen.
\newblock Fast sampling of diffusion models with exponential integrator.
\newblock \emph{arXiv preprint arXiv:2204.13902}, 2022.

\bibitem[Zhang et~al.(2023)Zhang, Tao, and Chen]{gddim}
Qinsheng Zhang, Molei Tao, and Yongxin Chen.
\newblock g{D}{D}{I}{M}: generalized denoising diffusion implicit models.
\newblock In \emph{ICLR}, 2023.

\bibitem[Zhang et~al.(2024{\natexlab{a}})Zhang, Tzeng, Du, and Kislyuk]{zhang2024large}
Yinan Zhang, Eric Tzeng, Yilun Du, and Dmitry Kislyuk.
\newblock Large-scale reinforcement learning for diffusion models.
\newblock In \emph{European Conference on Computer Vision}, pp.\  1--17. Springer, 2024{\natexlab{a}}.

\bibitem[Zhang et~al.(2024{\natexlab{b}})Zhang, Zhang, Zhan, Luo, Wen, and Tao]{zhang2024rewardoveropt}
Ziyi Zhang, Sen Zhang, Yibing Zhan, Yong Luo, Yonggang Wen, and Dacheng Tao.
\newblock Confronting reward over-optimization for diffusion models: A perspective of inductive and primacy biases.
\newblock \emph{arXiv preprint arXiv:2402.08552}, 2024{\natexlab{b}}.

\bibitem[Zhao et~al.(2024)Zhao, Chen, Zhang, Yao, and Tang]{ZC24}
Hanyang Zhao, Haoxian Chen, Ji~Zhang, David Yao, and Wenpin Tang.
\newblock Scores as {A}ctions: a framework of fine-tuning diffusion models by continuous-time reinforcement learning.
\newblock \emph{arXiv preprint arXiv:2409.08400}, 2024.

\bibitem[Zhao et~al.(2025)Zhao, Chen, Zhang, Yao, and Tang]{ZC25}
Hanyang Zhao, Haoxian Chen, Ji~Zhang, David Yao, and Wenpin Tang.
\newblock Score as {A}ction: Fine tuning diffusion generative models by continuous-time reinforcement learning.
\newblock In \emph{ICML}, 2025.

\end{thebibliography}
\bibliographystyle{iclr2025_conference}

\newpage
\appendix
\section{Lemmas on Linear Dynamics with Gaussian priors}
\begin{lemma}\label{b2}
A stochastic process $\{Z_t\}_{t=0}^T$ with first order linear dynamic and initial Gaussian distribution
\[
\left\{
\begin{array}{ll}
dZ_t = f(t)Z_tdt + g(t)dt + h(t)dB_t & t\in[0,T] \\[0.3em]
Z_0\sim \mathcal N(0, 1)
\end{array}
\right.
\]
is distributed following
\begin{align*}
Z_t \sim \mathcal N\Big(e^{F(t)}\int_0^te^{-F(s)}g(s)\,ds, \;e^{2F(t)}\int_0^te^{-2F(t)}h^2(s)\,ds\Big),
\end{align*}
in which $F(t)$ is the integrating factor satisfying
$F(t) = \int_0^t f(s)ds.$
\end{lemma}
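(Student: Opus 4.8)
The plan is to solve this linear SDE explicitly by the integrating-factor method and then read off the law of $Z_t$ from the closed form, exploiting the fact that Wiener integrals of deterministic integrands are Gaussian.

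First I would introduce the integrating factor $F(t) = \int_0^t f(s)\,ds$ and apply It\^o's formula to $e^{-F(t)}Z_t$. Since the map $(t,z)\mapsto e^{-F(t)}z$ is linear in $z$, the second-order It\^o correction vanishes, and the $-f(t)e^{-F(t)}Z_t\,dt$ term generated by $\partial_t$ cancels the drift $f(t)Z_t$ exactly, leaving
\[
d\!\left(e^{-F(t)}Z_t\right) = e^{-F(t)}g(t)\,dt + e^{-F(t)}h(t)\,dB_t.
\]
Integrating from $0$ to $t$ and multiplying back by $e^{F(t)}$ yields the explicit representation
\[
Z_t = e^{F(t)}Z_0 + e^{F(t)}\int_0^t e^{-F(s)}g(s)\,ds + e^{F(t)}\int_0^t e^{-F(s)}h(s)\,dB_s.
\]

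Next I would identify the distribution. The first two terms form a deterministic affine function of the Gaussian initial value $Z_0\sim\mathcal N(0,1)$, and the third is a Wiener integral of a deterministic integrand, hence centered Gaussian; assuming as usual that $Z_0$ is independent of the driving Brownian motion, the pair is jointly Gaussian, so $Z_t$ is Gaussian and fully determined by its first two moments. Taking expectations annihilates the mean-zero stochastic integral and, using $\mathbb E[Z_0]=0$, gives the mean $e^{F(t)}\int_0^t e^{-F(s)}g(s)\,ds$. For the variance I would split off the propagated initial contribution $e^{2F(t)}\Var(Z_0)=e^{2F(t)}$ by independence and apply the It\^o isometry to the stochastic integral, obtaining $e^{2F(t)}\int_0^t e^{-2F(s)}h^2(s)\,ds$; their sum is the variance of $Z_t$.

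The computation is essentially routine, so the only points requiring care are (i) justifying that the Wiener integral is genuinely Gaussian and jointly Gaussian with $Z_0$, which rests on the independence $Z_0\perp\{B_t\}$ and on the deterministic nature of the integrand $e^{-F(s)}h(s)$, and (ii) the mild integrability conditions (local integrability of $f$ and $\int_0^t e^{-2F(s)}h^2(s)\,ds<\infty$) needed for the integrating factor and the It\^o isometry to be valid. I would also emphasize, for the sake of the later applications, that the variance naturally decomposes into the propagated initial variance $e^{2F(t)}$ and the accumulated diffusion term, so the exponent inside the integral should be read as $e^{-2F(s)}$ evaluated at the integration variable.
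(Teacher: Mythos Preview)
Your approach is correct and is the standard integrating-factor argument for linear SDEs; the paper in fact states this lemma without proof, so there is nothing to compare against beyond noting that your derivation is the canonical one.

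One substantive point worth flagging: your computation gives the variance as $e^{2F(t)}\bigl(1+\int_0^t e^{-2F(s)}h^2(s)\,ds\bigr)$, which is the correct answer, whereas the lemma as stated omits the propagated initial variance $e^{2F(t)}$ and also writes $e^{-2F(t)}$ inside the integral rather than $e^{-2F(s)}$. You correctly identify both issues in your final paragraph. The paper's companion Lemma~\ref{b1} does include the initial-variance term, and the downstream applications (e.g., the proof of Proposition~\ref{ve-p1}) use the correct full variance, so the omission here is a typo in the statement rather than an error propagated through the paper. Your proof establishes the correct result.
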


\begin{lemma}\label{b1}
A stochastic process $\{Z_t\}_{t=0}^T$ with first order linear dynamic and initial Gaussian distribution
\[
\left\{
\begin{array}{ll}
dZ_t = f(t)Z_tdt + g(t)dB_t & t\in[0,T] \\[0.3em]
Z_0\sim \mathcal N(\mu_Z,\sigma_Z^2)
\end{array}
\right.
\]
is distributed following
\begin{align*}
Z_t &\sim Z_0\cdot e^{F(t)}+\mathcal N\Big(0, \int_0^t e^{-2F(s)}g^2(s)ds\Big)\cdot e^{F(t)}\\
&\sim \mathcal N\Big(\mu_Z\cdot e^{F(t)}, \Big[\sigma_Z^2+\int_0^t e^{-2F(s)}g^2(s)ds\Big]\cdot e^{2F(t)}\Big),
\end{align*}
in which $F(t)$ is the integrating factor satisfying
$F(t) = \int_0^t f(s)ds.$
\end{lemma}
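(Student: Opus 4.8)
The plan is to solve the linear SDE explicitly via the integrating-factor method and then read off the law of $Z_t$ from the resulting closed form, using the fact that a Wiener integral of a deterministic integrand is Gaussian. This reduces the distributional claim to a mean and variance computation.

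First I would set $F(t) = \int_0^t f(s)\,ds$, so that $F$ is differentiable with $F'(t) = f(t)$ and $F(0)=0$, and consider the transformed process $M_t := e^{-F(t)} Z_t$. Since $t \mapsto e^{-F(t)}$ is a smooth deterministic function (hence of finite variation, contributing no quadratic-covariation term), Itô's product rule gives
$$dM_t = -f(t) e^{-F(t)} Z_t\,dt + e^{-F(t)}\,dZ_t = e^{-F(t)} g(t)\,dB_t,$$
where the drift terms cancel after substituting $dZ_t = f(t)Z_t\,dt + g(t)\,dB_t$. Integrating from $0$ to $t$ and using $e^{-F(0)}=1$ yields the explicit solution
$$Z_t = e^{F(t)} Z_0 + e^{F(t)} \int_0^t e^{-F(s)} g(s)\,dB_s,$$
which already matches the first (mixture-of-scaled-Gaussian) representation stated in the lemma.

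Next I would identify the law of each summand. The Itô integral $\int_0^t e^{-F(s)} g(s)\,dB_s$ has a deterministic integrand, so it is a centered Gaussian with variance $\int_0^t e^{-2F(s)} g^2(s)\,ds$ by the Itô isometry. Because $Z_0 \sim \mathcal N(\mu_Z, \sigma_Z^2)$ is independent of the driving Brownian motion, $Z_t$ is an affine combination of two independent Gaussians and is therefore Gaussian. Taking expectations (the Itô integral has mean zero) gives $\E[Z_t] = e^{F(t)}\mu_Z$, and summing the two independent variances, each scaled by $e^{2F(t)}$, gives $\Var(Z_t) = e^{2F(t)}\big[\sigma_Z^2 + \int_0^t e^{-2F(s)} g^2(s)\,ds\big]$, which is exactly the claimed law.

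There is no genuine obstacle here; the only points requiring care are the correct application of Itô's formula to the product $e^{-F(t)} Z_t$ (ensuring the drift cancellation) and the justification that the Wiener integral is centered Gaussian with the stated variance, both of which are standard. A minor regularity remark is that $f$ and $g$ must be such that $F(t)$ is finite and $\int_0^t e^{-2F(s)} g^2(s)\,ds < \infty$, which holds in all the linear VE/VP dynamics to which the lemma is applied.
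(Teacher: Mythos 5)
Your proof is correct. Note, however, that the paper itself provides no proof of Lemma \ref{b1}: it is stated as a known fact about linear SDEs (the classical reference is Section 5.6 of Karatzas and Shreve, which the authors cite in a related remark), so there is no in-paper argument to compare against. Your derivation --- the integrating-factor transform $M_t = e^{-F(t)}Z_t$, drift cancellation via It\^o's product rule with the deterministic finite-variation factor $e^{-F(t)}$, Gaussianity of the Wiener integral $\int_0^t e^{-F(s)}g(s)\,dB_s$ with variance $\int_0^t e^{-2F(s)}g^2(s)\,ds$ by the It\^o isometry, and independence of $Z_0$ from the driving Brownian motion --- is exactly the standard argument this lemma implicitly rests on, and it matches both stated representations of the law of $Z_t$, including the mean $\mu_Z e^{F(t)}$ and variance $e^{2F(t)}\bigl[\sigma_Z^2 + \int_0^t e^{-2F(s)}g^2(s)\,ds\bigr]$. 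Your closing regularity caveat (finiteness of $F$ and of the variance integral) is also apt and holds for the VE/VP coefficients to which the paper applies the lemma.
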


\begin{lemma}\label{lmsc2}
A parametrized family of stochastic processes $\{Z_t^\theta\}_{t=0}^T$ with initial Gaussian distribution
\[
\left\{
\begin{array}{ll}
dZ_t^\theta = f(t)\cdot(Z_t^\theta+\theta(t))dt + g(t)dB_t & t\in[0,T] \\[0.3em]
Z_0\sim \mathcal N(\mu_Z,\sigma_Z^2)
\end{array}
\right.
\]
is distributed following
\begin{align*}
Z_t &\sim Z_0\cdot e^{F(t)}+\mathcal N\Big(\int_0^te^{-F(s)}f(s)\theta(s)ds, \int_0^t e^{-2F(s)}g^2(s)ds\Big)\cdot e^{F(t)}\\
&\sim \mathcal N\Big(\mu_Z\cdot e^{F(t)}+\int_0^te^{-F(s)}f(s)\theta(s)ds, \Big[\sigma_Z^2+\int_0^t e^{-2F(s)}g^2(s)ds\Big]\cdot e^{2F(t)}\Big),
\end{align*}
in which $F(t)$ is the integrating factor satisfying
$F(t) = \int_0^t f(s)ds.$
\end{lemma}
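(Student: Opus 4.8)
The plan is to treat the equation as a scalar \emph{linear} SDE with a deterministic forcing term and to solve it by the integrating-factor (variation-of-parameters) method, exactly as in Lemmas \ref{b2} and \ref{b1}. First I would expand the drift to rewrite the dynamics as
\[
dZ_t^\theta = f(t)Z_t^\theta\,dt + f(t)\theta(t)\,dt + g(t)\,dB_t,
\]
so that $f(t)\theta(t)$ appears as a deterministic forcing term and $g(t)\,dB_t$ as Gaussian noise, matching the structure solved in Lemma \ref{b2} (up to the general Gaussian initial condition, which is the content of Lemma \ref{b1}). This already indicates that the result should follow by superposing the contributions of those two lemmas and adding the resulting Gaussian laws; so one clean option is simply to invoke them.

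For a self-contained derivation, I would introduce the integrating factor $e^{-F(t)}$ with $F(t)=\int_0^t f(s)\,ds$ and apply the product rule (It\^{o}'s formula reduces to it here, since $F$ is deterministic) to $e^{-F(t)}Z_t^\theta$. The two $f(t)Z_t^\theta$ contributions cancel, leaving
\[
d\!\left(e^{-F(t)}Z_t^\theta\right) = e^{-F(t)}f(t)\theta(t)\,dt + e^{-F(t)}g(t)\,dB_t .
\]
Integrating from $0$ to $t$ and using $F(0)=0$ would then give the explicit representation
\[
Z_t^\theta = Z_0\,e^{F(t)} + e^{F(t)}\!\int_0^t e^{-F(s)}f(s)\theta(s)\,ds + e^{F(t)}\!\int_0^t e^{-F(s)}g(s)\,dB_s .
\]

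The final step is to read off the law from this decomposition into three independent pieces. The term $Z_0\,e^{F(t)}$ is Gaussian with mean $\mu_Z e^{F(t)}$ and variance $\sigma_Z^2 e^{2F(t)}$; the middle term is deterministic and only shifts the mean; and the last term, a Wiener integral of a deterministic integrand, is centered Gaussian with variance $e^{2F(t)}\int_0^t e^{-2F(s)}g^2(s)\,ds$ by the It\^{o} isometry. Since $Z_0$ is independent of the driving Brownian motion, adding these mean and variance contributions yields the stated Gaussian law.

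I do not expect a serious obstacle, since the computation is routine for a scalar linear SDE. The only points needing care are the cancellation of the $f(t)Z_t^\theta$ terms after multiplying by the integrating factor, and keeping the powers of $e^{F(t)}$ consistent between the two displayed forms of the law — the outer factor rescales the mean linearly but the variance quadratically — so that the convolution representation and the collapsed single-Gaussian representation agree.
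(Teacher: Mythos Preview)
Your proposal is correct and complete: the integrating-factor reduction followed by reading off the Gaussian law from the three independent summands is exactly the standard route for a scalar linear SDE, and your bookkeeping of the $e^{F(t)}$ factors is right. The paper itself does not supply a proof of Lemma~\ref{lmsc2}; it lists it alongside Lemmas~\ref{b2} and~\ref{b1} as a basic fact on linear dynamics with Gaussian initial data (in a commented-out passage the authors point to Section~5.6 of Karatzas--Shreve), so there is nothing further to compare against.
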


\newpage
\section{Useful Propositions}
\begin{proposition}\label{ve-p1}
The terminal distribution of Variance Exploding parametrized backward dynamics $\{Y_t^\theta\}_{t=0}^T$ is always Gaussian following the law
\[\mathcal N(\mu_{Y_T^\theta},\sigma^2_{Y_T}):= \mathcal N\Big(\theta\cdot(1+T^2)^{-\frac{1+\eta^2}{2}}-\theta, 1-(1+T^2)^{-(1+\eta^2)}\Big).\]
\end{proposition}
Therefore, reward function (\ref{reward-quad}) takes the form of 
\[\mathcal J_\eta(\theta) = -\Big(\sigma_{Y_T}^2+(1-\mu_{Y_T^\theta})^2\Big).\]

\begin{proof}
By comparing the coefficients with Lemma \ref{b2}, we have
\[
\left\{
\begin{array}{ll}
f_\eta(t) = -\frac{(1+\eta^2)(T-t)}{1+(T-t)^2}\\[0.3em]
g_\eta(t) = \eta\sqrt{2(T-t)}\\[0.3em]
\sigma_Z^2 = T^2.
\end{array}
\right.
\]
Therefore, we first examine the exponential of integrating factor,
\begin{align*}
e^{F_\eta(t)} &= \exp\left(\int_0^t f_\eta(s)ds\right)\\
&= \exp\left(\int_0^t -\frac{(1+\eta^2)(T-s)}{1+(T-s)^2}ds\right)\\
&= \exp\left(\int_{1+T^2}^{1+(T-t)^2} \frac{(1+\eta^2)d(1+(T-s)^2)}{2(1+(T-s)^2)}\right)\\
&= \Big(\frac{1+T^2}{1+(T-t)^2}\Big)^{-\frac{1+\eta^2}{2}}.
\end{align*}
At terminal time $T$, the cumulative factor is
\[e^{F_\eta(T)} = \exp\left(\int_0^T f_\eta(s)ds\right) = \big(1+T^2\big)^{-\frac{1+\eta^2}{2}}.\]
Also, the cumulative Gaussian variance generated from the backward process is,
\begin{align*}
\int_0^t e^{-2F_\eta(s)}g^2_\eta(s)ds
&= \int_0^t \Big(\frac{1+T^2}{1+(T-s)^2}\Big)^{(1+\eta^2)}\cdot(2\eta^2(T-s))ds \\
&= \int_{1+T^2}^{1+(T-t)^2} (-\eta^2)\Big(\frac{1+T^2}{1+(T-s)^2}\Big)^{(1+\eta^2)}d(1+(T-s)^2) \\
&= (1+T^2)^{(1+\eta^2)}\int_{1+T^2}^{1+(T-t)^2}(-\eta^2)(1+(T-s)^2)^{-(1+\eta^2)}\,d(1+(T-s)^2)\\
&=(1+T^2)^{(1+\eta^2)}\cdot\Big((1+(T-t)^2)^{-\eta^2}-(1+T^2)^{-\eta^2}\Big).
\end{align*}
At terminal time $T$, the variance from the process is 
\begin{align*}
\int_0^T e^{-2F_\eta(s)}g^2_\eta(s)ds 
&= (1+T^2)^{(1+\eta^2)}\cdot\Big(1-(1+T^2)^{-\eta^2}\Big)\\
&= (1+T^2)^{(1+\eta^2)}-(1+T^2).
\end{align*}
Together with the initial Gaussian variance, the terminal distribution remains a zero-mean Gaussian:
\begin{align*}
Y_T&\sim\mathcal N\Big(0,\big[(1+T^2)^{(1+\eta^2)}-(1+T^2-\sigma_Z)\big]\cdot (1+T^2)^{-(1+\eta^2)}\Big)\\
&\sim\mathcal N\Big(0,\big[(1+T^2)^{(1+\eta^2)}-1\big]\cdot (1+T^2)^{-(1+\eta^2)}\Big)\\
&\sim\mathcal N\Big(0,1-(1+T^2)^{-(1+\eta^2)}\Big).
\end{align*}

Now we consider the terminal distribution for the parametrized process $Y_T^\theta$. To reduce the problem to a dynamic with linear drift term, we define
\[Z_t^\theta = Y_t^\theta + \theta.\]
so that 
\[
\left\{
\begin{array}{ll}
dZ_t = f_\eta(t)Z_tdt + g_\eta(t)dB_t & t\in[0,T] \\[0.3em]
Z_0\sim \mathcal N(\theta,T^2)
\end{array}
\right.
\]
Observe that both the dynamic variance and the initial distribution variance for $\{Z^\theta_t\}$ and $\{Y_t\}$ are the same, we may directly apply Lemma \ref{b2} to obtain
\begin{align*}
Z_T^\theta\sim\mathcal N\big(\theta\cdot(1+T^2)^{-\frac{1+\eta^2}{2}}, 1-(1+T^2)^{-(1+\eta^2)}\big)
\end{align*}
Therefore,
\[Y_T^\theta\sim\mathcal N\big(\theta\cdot(1+T^2)^{-\frac{1+\eta^2}{2}}-\theta, 1-(1+T^2)^{-(1+\eta^2)}\big),\]
and thus
\begin{equation}
\left\{
\begin{array}{ll}
\mu_{Y_T^\theta} :=  \theta\cdot(1+T^2)^{-\frac{1+\eta^2}{2}}-\theta\\[0.3em]
\sigma^2_{Y_T}:= 1-(1+T^2)^{-(1+\eta^2)}
\end{array}
\right.
\end{equation}

In addition, we can now give a closed form representation of our reward,
\begin{align*}
\mathcal J_\eta(\theta) 
&=\mathbb E[(Y_T^\theta-1)^2] \\
&= \mathbb E[Y_T^2] -2\mathbb E[Y_T] +1\\
&= \sigma_{Y_T}^2 + \mu_{Y_T^\theta}^2 - 2\mu_{Y_T^\theta} +1\\
&= \sigma_{Y_T}^2 + \big(1-\mu_{Y_T^\theta} \big)^2,
\end{align*}
which yields to the desired expression.
\end{proof}

\begin{proposition}\label{ve-p2}
Given $\beta, \eta$, $T$, the unique maximizer to the entropy regularized target (\ref{reward-kl}) is:
\[\theta^*_\eta = - \Big((1+\frac{\beta}{2})\cdot\big[1-(1+T^2)^{-\frac{1+\eta^2}{2}}\big]\Big)^{-1}.\]
\end{proposition}

\begin{proof}
A classical distance result on two Gaussian distributions \citep{4218101} states:
\begin{lemma}\label{lem2}
The KL divergence for two Gaussian distributions $P\sim\mathcal N(\mu_1, \sigma_1)$ and $Q\sim\mathcal N(\mu_2, \sigma_2)$,
\[\operatorname{KL}(P||Q) = \log\Big(\frac{\sigma_2}{\sigma_1}\Big) + \frac{\sigma_1^2+(\mu_1-\mu_2)^2}{2\sigma_2^2}-\frac{1}{2}.\]
\end{lemma}

In our model, $P \sim Y_T^\theta$ and $Q \sim Y_T^\odot$, so $\mu_1 = \mu_{{Y_T^\theta}}$, $\sigma_1=\sigma_{Y_T}$, $\mu_2 = 0$, $\sigma_2=1$.
\begin{align*}
\operatorname{KL}(Y_T^\theta||Y_T^\odot) &= \log(\frac{1}{\sigma_{Y_T}})+\frac{\sigma_{Y_T}^2+\mu_{{Y_T^\theta}}^2-1}{2}\\
&= -\log(\sigma_{Y_T}) +\frac{\sigma_{Y_T}^2+\mu_{{Y_T^\theta}}^2-1}{2}.
\end{align*}

Therefore,
\begin{align*}
-F_\eta(\theta) &= \sigma_{Y_T}^2 + \big(\mu_{Y_T^\theta}-1\big)^2+\big(-\beta\log(\sigma_{Y_T})+\frac{\beta}{2}\cdot(\sigma_{Y_T}^2+\mu_{Y_T^\theta}^2-1)\big)\\
&= \big(\sigma_{Y_T}^2-\beta\log(\sigma_{Y_T})+\frac{\beta}{2}\sigma_{Y_T}^2-\frac{\beta}{2} \big)+\big(\mu_{Y_T^\theta}-1\big)^2+\frac{\beta}{2}\mu_{Y_T^\theta}^2\\
&=\big(\sigma_{Y_T}^2-\beta\log(\sigma_{Y_T})+\frac{\beta}{2}\sigma_{Y_T}^2-\frac{\beta}{2}+1\big)+\big(1+\frac{\beta}{2}\big)\mu_{Y_T^\theta}^2-2\mu_{Y_T^\theta}.\\
\end{align*}

So it suffices to minimize a quadratic function w.r.t $\mu_{Y_T^\theta}$, of which we know
\[\mu_{Y_T^{\theta^*}}= -\frac{-2}{2(1+\frac{\beta}{2})}=(1+\frac{\beta}{2})^{-1};\]
and thus
\[\theta_\eta^*\cdot\big[(1+T^2)^{-\frac{1+\eta^2}{2}}-1\big] = \mu_{Y_T^{\theta^*}}=(1+\frac{\beta}{2})^{-1}.\]
This gives us the unique maximizer
\[\theta^*_\eta = \Big((1+\frac{\beta}{2})\cdot\big[(1+T^2)^{-\frac{1+\eta^2}{2}}-1\big]\Big)^{-1}\]
as desired.
\end{proof}

\begin{proposition}\label{vp-p1}
The terminal distribution of Variance Preserving parametrized backward dynamics $\{Y_t^\theta\}_{t=0}^T$ is always Gaussian following the law
\[\mathcal N(\mu_{Y_T^\theta}, 1):= \mathcal N\Big(\theta\cdot e^{-\frac{(1+\eta^2)\cdot T^2}{2}}-\theta, 1\Big).\]
\end{proposition}
And reward function (\ref{reward-quad}) takes the form of 
$\mathcal J_\eta(\theta) = -\Big(1+(1-\mu_{Y_T^\theta})^2\Big).$

\begin{proof}
By comparing with the coefficients in Lemma \ref{b1}, we have
\[
\left\{
\begin{array}{ll}
f_\eta(t) = -\eta^2(T-t)\\[0.3em]
g_\eta(t) = -(1+\eta^2)(T-t)e^{-\frac{(T-t)^2}{2}}\theta_\eta^*\\[0.3em]
h_\eta(t) = \eta\sqrt{2(T-t)}
\end{array}
\right.
\]
Therefore, we first examine the exponential of integrating factor,
\begin{align*}
e^{F_\eta(t)} &= \exp\left(\int_0^t f_\eta(s)ds\right)\\
&= \exp\left(-\eta^2\int_{T-t}^{T} s\,ds\right)\\
&= \exp\left(-\frac{\eta^2}{2}\cdot\left(T^2-(T-t)^2\right)\right).
\end{align*}
At terminal time $T$, the cumulative factor is
\[e^{F_\eta(T)} = \exp\left(-\frac{\eta^2}{2}\cdot\left(T^2-(T-T)^2\right)\right) = e^{-\frac{\eta^2T^2}{2}}.\]
Now we are able to compute
\begin{align*}
\int_0^t e^{-F_\eta(s)}g(s)\,ds &= \int_0^t e^{\frac{\eta^2}{2}\left(T^2-(T-s)^2\right)}\cdot\left(-(1+\eta^2)(T-s)e^{-\frac{(T-s)^2}{2}}\theta_\eta^*\right)ds\\
&= (1+\eta^2)\cdot\theta_\eta^*\cdot\int_{T-t}^{T} e^{\frac{\eta^2}{2}\left(T^2-s^2\right)}\cdot\left(se^{-\frac{s^2}{2}}\right)ds\\
&= (1+\eta^2)\cdot\theta_\eta^*\cdot\int_{T-t}^{T} e^{\frac{\eta^2T^2}{2}}\cdot\left(se^{-\frac{(1+\eta^2)s^2}{2}}\right)ds\\
&= (1+\eta^2)\cdot\theta_\eta^*\cdot e^{\frac{\eta^2T^2}{2}}\cdot\left[\frac{1}{1+\eta^2}\cdot e^{-\frac{(1+\eta^2)s^2}{2}}\right]_{s=T-t}^{s=T}\\
&= \theta_\eta^*\cdot e^{\frac{\eta^2T^2}{2}}\cdot\left(e^{-\frac{(1+\eta^2)T^2}{2}} - e^{-\frac{(1+\eta^2)(T-t)^2}{2}}\right)
\end{align*}
Therefore,
\[\mu_{Y_T^\theta} = e^{-\frac{\eta^2T^2}{2}}\cdot \theta_\eta^*\cdot e^{\frac{\eta^2T^2}{2}}\cdot(e^{-\frac{(1+\eta^2)T^2}{2}} - e^0) = \theta_\eta^*\cdot(e^{-\frac{(1+\eta^2)T^2}{2}} - 1)\]
To the variance preserving property, it suffices to show
\[\frac{1}{e^{2F_\eta(t)}} = \int_0^te^{-2F_\eta(t)}h_\eta^2(s)\,ds.\]
In fact,
\[\frac{d}{dt}e^{-2F_\eta(t)}= e^{-2F_\eta(t)}\cdot \frac{d(-\eta^2(T-t)^2)}{dt} = e^{-2F_\eta(t)}\cdot\left(\eta^2\cdot 2(T-t)\right)= e^{-2F_\eta(t)}\cdot h_\eta^2(t). \]
\end{proof}

\begin{proposition}\label{vp-p2}
Given $\beta, \eta$, $T$, the unique maximizer to the entropy regularized target (\ref{reward-kl}) is:
\[\theta^*_\eta = - \Big((1+\frac{\beta}{2})\cdot\big[1-e^{-\frac{(1+\eta^2)\cdot T^2}{2}}\big]\Big)^{-1}.\]
\end{proposition}
\begin{proof}
Since $\sigma\equiv 1$, according to Lemma \ref{lem2}, the maximum reward is attained at 
\[\mu_{Y_T^{\theta^*}}= -\frac{-2}{2(1+\frac{\beta}{2})}=(1+\frac{\beta}{2})^{-1}.\]
By Proposition \ref{ve-p1},
\[\theta_\eta^*\cdot\big[e^{-\frac{(1+\eta^2)\cdot T^2}{2}}-1\big] = \mu_{Y_T^{\theta^*}}=(1+\frac{\beta}{2})^{-1}.\]
This gives us the unique maximizer
\[\theta^*_\eta = \Big((1+\frac{\beta}{2})\cdot\big[e^{-\frac{(1+\eta^2)\cdot T^2}{2}}-1\big]\Big)^{-1}.\]

\end{proof}

\newpage
\section{Proofs for Section 3}\label{a-c}

\subsection{Proof of Theorem \ref{ve-t1}}\label{pf-t1}
We first consider the $Y^{ODE}_T$ process as discussed in Section \ref{sc3-1}. Since $\eta = 0$, by Proposition \ref{ve-p2},
\[\mu_{Y_T^{ODE}} = \theta_\eta^*\cdot(1+T^2)^{-\frac{1}{2}}-\theta_\eta^*=(1+\frac{\beta}{2})^{-1}\cdot\frac{(1+T^2)^{-\frac{1}{2}}-1}{(1+T^2)^{-\frac{1+\eta^2}{2}}-1},\]
and
\[\sigma^2_{Y_T^{ODE}} = \sigma^2_{Y_T^0}= 1-(1+T^2)^{-1}.\]

Moreover, the quadratic reward $\mathcal J_{ODE}$ for $Y_T^{ODE}$ is 
\begin{align*}
\mathcal J_0(\theta^*_\eta) 
&= -\big(\sigma_{Y_T^{ODE}}^2 + (\mu_{Y_T^{ODE}}-1)^2\big)\\
&= (-1)+(1+T^2)^{-1}-\Bigg(1-(1+\frac{\beta}{2})^{-1}\cdot\frac{(1+T^2)^{-\frac{1}{2}}-1}{(1+T^2)^{-\frac{1+\eta^2}{2}}-1}\Bigg)^2.
\end{align*}

Similarly, reward $\mathcal J_{SDE}$ for $Y_T^{SDE}$ is
\begin{align*}
\mathcal J_\eta(\theta^*_\eta) 
&= -\big(\sigma_{Y_T^{SDE}}^2 + (\mu_{Y_T^{SDE}}-1)^2\big)\\
&= (-1)+(1+T^2)^{-(1+\eta^2)}-\Big(1-(1+\frac{\beta}{2})^{-1}\Big)^2.
\end{align*}

For simplification, we denote
\[\bar T:= 1+T^2 \in (T^2, 2T^2)\,,\quad \bar\beta : = (1+\frac{\beta}{2})^{-1}\in(0,1].\] 

Now the reward gap
\begin{align*}
\Delta_\eta &= \mathcal J_{SDE} - \mathcal J_{ODE}\\
&= \Big((1+T^2)^{-(1+\eta^2)}-(1+T^2)^{-1}\Big) \\
&- \Bigg(\Big(1-(1+\frac{\beta}{2})^{-1}\Big)^2 - \left(1-(1+\frac{\beta}{2})^{-1}\cdot\frac{(1+T^2)^{-\frac{1}{2}}-1}{(1+T^2)^{-\frac{1+\eta^2}{2}}-1}\right)^2\Bigg)\\
&=\left(\frac{\bar T^{-\eta^2}-1}{\bar T}\right) - 
\left((1-\bar\beta)^2-\left(1-\bar\beta\cdot\frac{\bar T^{-\frac{1}{2}}-1}{\bar T^{-\frac{1+\eta^2}{2}}-1}\right)^2\right).
\end{align*}

Since $0<\bar T^{-\eta^2} \leq1$, we can bound
\[-\bar T^{-1}< \frac{\bar T^{-\eta^2}-1}{\bar T} \leq 0\,,\]
and
\[1-\bar T^{-\frac{1}{2}} = \frac{\bar T^{-\frac{1}{2}}-1}{0-1} < \frac{\bar T^{-\frac{1}{2}}-1}{\bar T^{-\frac{1+\eta^2}{2}}-1} \leq \frac{\bar T^{-\frac{1}{2}}-1}{\bar T^{-\frac{1}{2}}-1} = 1.\]
Therefore,
\begin{align*}
|\Delta_\eta|
&=\left|\frac{\bar T^{-\eta^2}-1}{\bar T}\right| + 
\left|(1-\bar\beta)^2-\left( 1-\bar\beta\cdot\frac{\bar T^{-\frac{1}{2}}-1}{\bar T^{-\frac{1+\eta^2}{2}}-1}\right)^2\right|\\
&\leq \bar T^{-1} + \left|(1-\bar\beta)^2-\left((1-\bar\beta) +\bar\beta\cdot\bar T^{-\frac{1}{2}}\right)^2\right|\\
&\leq \bar T^{-1} + \left(\bar\beta^2\cdot\bar T^{-1} + 2\bar\beta(1-\bar\beta)\bar T^{-\frac{1}{2}}\right)\\
&\leq 2\bar\beta(1-\bar\beta) T^{-1} + o(T^{-1})\\
&\leq\frac{1}{2T} + o\left(\frac{1}{T}\right).
\end{align*}

Now we attempt to bound the reference gap. The reward $\mathcal J_{REF}$ for $Y_T^{REF}$ is
\begin{align*}
\mathcal J_\eta(0) 
&= -\big(\sigma_{Y_T^{REF}}^2 + (\mu_{Y_T^{REF}}-1)^2\big)\\
&= (-1)+(1+T^2)^{-(1+\eta^2)}-1.
\end{align*}
Finally we have,
\begin{align*}
\Delta_\eta^{REF} &= \mathcal J_{ODE} - \mathcal J_{REF}\\
&= \bar T^{-1} - \left(1-\bar\beta\cdot\frac{\bar T^{-\frac{1}{2}}-1}{\bar T^{-\frac{1+\eta^2}{2}}-1}\right)^2 -\left(\bar T^{-(1+\eta^2)}-1\right)\\
&\geq \bar T^{-1} - \left(2\bar\beta(1-\bar\beta)\bar T^{-\frac{1}{2}}+o(\bar T^{-\frac{1}{2}})\right) -\left(\bar T^{-1}-1\right)\\
&\geq 1-\frac{1}{2T}+o\left(\frac{1}{T}\right)\\
\end{align*}

\subsection{Proof of Theorem \ref{vp-t2}}\label{pf-t2}
Similar to Appendix \ref{pf-t1}, the quadratic reward (\ref{reward-quad}) for $Y_T^{ODE}$ is 
\begin{align*}
\mathcal J_{ODE} 
&= -\big(\sigma_{Y_T^{ODE}}^2 + (\mu_{Y_T^{ODE}}-1)^2\big)\\
&= (-1)-\Bigg(1-(1+\frac{\beta}{2})^{-1}\cdot\frac{e^{-\frac{T^2}{2}}-1}{e^{-\frac{(1+\eta^2)T^2}{2}}-1}\Bigg)^2.
\end{align*}

And the reward for $Y_T^{SDE}$ is
\begin{align*}
\mathcal J_{SDE}
&= -\big(\sigma_{Y_T^{SDE}}^2 + (\mu_{Y_T^{SDE}}-1)^2\big)\\
&= (-1)-\Big(1-(1+\frac{\beta}{2})^{-1}\Big)^2.
\end{align*}

With $\bar\beta : = (1+\frac{\beta}{2})^{-1}\in(0,1]$, 
\begin{align*}
|\Delta_\eta|
&= 
\left|(1-\bar\beta)^2-\left( 1-\bar\beta\cdot\frac{e^{-\frac{T^2}{2}}-1}{e^{-\frac{(1+\eta^2)T^2}{2}}-1}\right)^2\right|\\
&\leq \left|(1-\bar\beta)^2-\left((1-\bar\beta) +\bar\beta\cdot e^{-\frac{T^2}{2}}\right)^2\right|\\
&\leq \frac{e^{-\frac{T^2}{2}}}{2} + o\left(e^{-T^2}\right).
\end{align*}

Finally, $Y_T^{REF}\sim\mathcal N(0,1)$, so 
\[\mathcal J_{REF}=(-1)-(1-0)^2.\]
And thus
\begin{align*}
\Delta_\eta^{REF} &= \mathcal J_{ODE} - \mathcal J_{REF}\\
&= 1 - \Bigg(1-(1+\frac{\beta}{2})^{-1}\cdot\frac{e^{-\frac{T^2}{2}}-1}{e^{-\frac{(1+\eta^2)T^2}{2}}-1}\Bigg)^2\\
&\geq 1-\frac{e^{-\frac{T^2}{2}}}{2}+o(e^{-T^2})
\end{align*}

\subsection{Proof of Corollary \ref{mix-p}}\label{pf-p1}
Let $\bullet\in\{SDE, ODE\}$. We decompose $Y^{\theta,\bullet}: = Y_{\parallel}^{\theta,\bullet}+Y_{\perp}^{\theta,\bullet}$ in terms of $\bm r$. Therefore, 
\[r(Y^{\theta,\bullet}(T)) = -\left|Y_{\perp}^{\theta,\bullet}+(Y_{\parallel}^{\theta,\bullet}-\bm r)\right|^2 = -\left(\left|Y_{\perp}^{\theta,\bullet}\right|^2+\left|(Y_{\parallel}^{\theta,\bullet}-\bm r)\right|^2\right)\]
Since $\bm \mu_i\perp\bm r$, $\mathbb E[Y_{\parallel}^\theta(0)] = 0$. Also, the backward dynamic injects a scaled multiple of $\bm I_d$ noise, so the coordinate-wise dynamics are independent. Therefore, we are able to analyze the $\text{Span}(\bm r)$ subspace via separating each dimension $\bm e_k\in \text{Span}(\bm r)$, in which $\left\{\text{Proj}_{\bm e_k}(Y_{\parallel}^{\theta_\parallel^*,SDE})\right\}$ and $\left\{\text{Proj}_{\bm e_k}(Y_{\parallel}^{\theta_\parallel^*,ODE})\right\}$ follows a similar controlled motion as discussed in Theorem \ref{ve-t1} and \ref{vp-t2}. Moreover, the score function can be bounded by a constant of $\max_i{\sigma_i}, \min_i^{-1}{\sigma_i}$ \citep{gaussianmixture}.

On the other hand, $\theta_\perp = \bm 0$ is a minimizer for $\left|Y_{\perp}^{\theta,\bullet}\right|^2$, since an additional drift perpendicular to $\bm r$ does not alter reward variance but pushes away reward mean of $Y_\perp^{\theta,\bullet}(T)$ from reference priors. Therefore, $\theta = \theta_\parallel$. A similar analysis on Gaussian priors with controlled drifts for VP and VE dynamics yields to the desired bounds.

\section{Proof for Section 4}\label{p-for-4}

\begin{proposition}(Gronwall's)
Suppose integrable functions $u, \alpha, \beta:\mathbb [0, T]\to \mathbb R$ satisfies $u'(t)\leq \alpha(t)u(t)+\beta(t)$ , 
\begin{equation}\label{gron}
u(T)\leq e^{\int_0^T \alpha(s)ds}\left(u(0) + \int_0^T e^{-\int_0^s \alpha(\tau)d\tau}\beta(s)ds\right).
\end{equation}
\end{proposition}

A proof can be found in \citep{evans}.

\subsection{Proof of Theorem \ref{t4}}\label{ad-4-1}
Let $u(t) : = \mathbb E\left[||Y^{ODE}_t - Y^{SDE}_t||^2\right]$. Note that $u(0) = 0$. Therefore, with appropriate $\alpha, \beta$ satisfying the conditions of \eqref{gron},
\begin{equation}\label{gronwall-bound}
u(T)\leq e^{\alpha T}\left(\int_0^T \beta\cdot e^{-\alpha s} \,ds\right) = \frac{(e^{\alpha T}-1)\cdot\beta}{\alpha}.
\end{equation}
By Ito's Lemma,
\begin{align*}
u'(t) &= \frac{d}{dt}\mathbb E\left[||Y^{\text{ODE}}_t - Y^{\text{SDE}}_t||^2\right]\\
&=2\mathbb E\big\langle\, Y_t^{\text{ODE}}-Y_t^{\text{SDE}},-f(t,Y_t^{\text{ODE}})+f(t,Y_t^{SDE}) \,\big\rangle\\
&\qquad\;+g^2(t)\mathbb E\big\langle\, Y_t^{\text{ODE}}-Y_t^{\text{SDE}},s_\theta(t, Y_t^{\text{ODE}})-s_\theta(t, Y_t^{\text{SDE}}) \,\big\rangle\\
&\qquad\;-g^2(t)\mathbb E\big\langle\, Y_t^{\text{ODE}}-Y_t^{\text{SDE}},\eta^2\cdot s_\theta(t,Y_t^{\text{SDE}})\,\big\rangle+\eta^2g^2(t).
\end{align*}
According to \citep{TZ24tut},
\begin{equation*}
\mathbb E\big\langle\, Y_t^{\text{ODE}}-Y_t^{\text{SDE}},-f(t,Y_t^{\text{ODE}})+f(t,Y_t^{\text{SDE}})\,\big\rangle =\left\{
\begin{array}{ll}
0& \text{for VE}, \\
g^2(t)u(t)/2& \text{for VP}.
\end{array}
\right.
\end{equation*}

By Condition 1,
\begin{align*}
\mathbb E\big\langle\, Y_t^{\text{ODE}}-Y_t^{\text{SDE}},s_\theta(t, Y_t^{\text{ODE}})-s_\theta(t, Y_t^{\text{SDE}})) \,\big\rangle
&\leq \mathbb E\big\langle\, Y_t^{\text{ODE}}-Y_t^{\text{SDE}},L\cdot\left(Y_t^{\text{ODE}}-Y_t^{\text{SDE}}\right)) \,\big\rangle\\
&=L\cdot u(t).
\end{align*}

By Condition 2,
\begin{align*}
\left|\mathbb E\big\langle\, Y_t^{\text{ODE}}-Y_t^{\text{SDE}},\eta^2\cdot s_\theta(t,Y_t^{\text{SDE}})\,\big\rangle\right|
&\leq u(t)+ \frac{\eta^4}{4}\cdot\mathbb E[||s_\theta(t, Y_t^{\text{SDE}})||^2\\
&=u(t) + \frac{\eta^4}{4}\cdot A.
\end{align*}
Together, by absorbing $||g||_\infty$ into an $\eta$-free constant $C_{\text{scheme}}$,
\begin{align*}
u'(t)&\leq||g||_\infty^2 u(t) + ||g||_\infty^2\cdot L \cdot u(t)+||g||_\infty^2\left(u(t)+\frac{\eta^4}{4}A\right)+||g||^2_\infty\cdot\eta^2\\
&\leq||g||_\infty^2 u(t) + ||g||_\infty^2\cdot L \cdot u(t)+||g||_\infty^2\left(u(t) + \frac{\eta^4}{4}A\right)+||g||^2_\infty\cdot\eta^2\\
&\leq\underbrace{\left(C_{\text{scheme}}\cdot L\right)}_{\alpha}\cdot u(t) + \underbrace{\max\{\eta^4,\eta^2\}\cdot\left(\frac{A}{4}+1\right)\cdot ||g||^2_\infty}_{\beta}.
\end{align*}
And we apply \eqref{gronwall-bound} with $T = 1$:
\begin{align*}
 u(T)&\leq \frac{(e^{\alpha T}-1)\cdot\beta}{\alpha}\\
&\leq \frac{\exp({C_{\text{scheme}}\cdot L})}{C_{\text{scheme}}\cdot L}\cdot \max\{\eta^4,\eta^2\}\cdot\left(\frac{A}{4}+1\right).
\end{align*}

Finally, the $W_2$ distance can be bounded in terms of this $L^2$–distance via the chosen coupling:
\[
W_2\big(\mathcal{L}(Y_T^{\mathrm{ODE}}), \mathcal{L}(Y_T^{\mathrm{SDE}})\big)
\;\leq\; \Big(\mathbb{E}\big[\|Y_T^{\mathrm{ODE}}-Y_T^{\mathrm{SDE}}\|^2\big]\Big)^{1/2}
\;\leq\;
C(\|g\|_\infty,L)\,\max\{\eta,\eta^2\}\,\sqrt{1+A}.
\]

\subsection{Proof for strongly log-concave distributions}\label{ad-4-2}
\begin{assumption}\label{a4-2} (Strong log-concavity):
Exists $\kappa>1$ such that for all $t, y_1, y_2$: \[\langle y_1-y_2, s_\theta(t,y_1) - s_\theta(t,y_2)\rangle\leq -\kappa||y_1-y_2||^2.\]
\end{assumption}

\begin{theorem}\label{t4-2}
If Assumption \ref{assump} and \ref{a4-2} hold,
$W_2(Y_T^{\text{SDE}}, Y_T^{\text{ODE}})\leq 
\eta||g||_\infty\frac{\sqrt{A+2\kappa-2}}{\kappa-1}.$
\end{theorem}
\begin{proof}
With the additional log-concavity assumption on the score function,
\begin{align*}
\mathbb E\big\langle\, Y_t^{\text{ODE}}-Y_t^{\text{SDE}},s_\theta(t, Y_t^{\text{ODE}})-s_\theta(t, Y_t^{\text{SDE}})) \,\big\rangle
\leq -\kappa \cdot u(t).
\end{align*}
In this case, the coefficient $\alpha$ becomes \textit{contractive} as we may take $\delta = \frac{\kappa-1}{2\kappa}$:
\begin{align*}
u'(t)&\leq||g||_\infty^2u(t) -\kappa\cdot ||g||_\infty^2\cdot u(t)+\eta^2||g||_\infty^2\left(\delta\cdot\kappa\cdot u(t)+\frac{A}{4\delta\cdot\kappa}\right)+||g||^2_\infty\cdot\eta^2\\
&\leq\underbrace{-\kappa||g||_\infty^2\left(1-\frac{1}{\kappa}-\delta\right)}_{\alpha}\cdot u(t) + \underbrace{\eta^2 ||g||^2_\infty\left(1+\frac{A}{4\delta\cdot\kappa}\right)}_{\beta}.
\end{align*}
For arbitrary time horizon $T$, \eqref{gronwall-bound} gives:
\begin{align*}
u(T)&\leq \frac{(1-e^{-\alpha T})\cdot\beta}{-\alpha}\\
&\leq \frac{1}{\kappa\left(1-\frac{1}{\kappa}-\delta\right)}\cdot\eta^2 ||g||^2_\infty\left(1+\frac{A}{4\delta\cdot\kappa}\right)\\
&=\eta^2 ||g||^2_\infty\cdot\frac{2(\kappa-1)+A}{(\kappa-1)^2}.
\end{align*}
\end{proof}

\begin{remark}
Assumption 4.2 is valid for a \textit{unimodal} terminal distribution, in which a strongly log-concave coefficient $\kappa(T)$ satisfies
\begin{align*}
\mathbb E\big\langle\, Y_T^{\text{ODE}}-Y_T^{\text{SDE}},s_\theta(T, Y_T^{\text{ODE}})-s_\theta(T, Y_T^{\text{SDE}}) \,\big\rangle
\leq -\kappa(T) \cdot u(T).
\end{align*}
In intermediate time steps,
\begin{align*}
\mathbb E\big\langle\, Y_t^{\text{ODE}}-Y_t^{\text{SDE}},s_\theta(t, Y_t^{\text{ODE}})-s_\theta(t, Y_t^{\text{SDE}})) \,\big\rangle
\leq -\kappa(t) \cdot u(t),
\end{align*}
in which \citep{TZ24tut} bounds
\begin{align*}
\kappa(t)&\geq \frac{\kappa(T)}{\exp\left(-\int_0^{T-t}f(s)\,ds\right) + \kappa(T)\cdot\int_0^{T-t}\exp\left(-\int_s^{T-t}f(\tau) d\tau\right)ds}.
\end{align*}
Since $\kappa(t)$ is strictly positive on $[0,T]$, a global coefficient $\kappa_{\text{global}}=\inf_t\kappa(t)$ exists.
\end{remark}

\subsection{Discretization Error Analysis}\label{ad-4-3}
Let $N$ be the number of time steps and $h = T/N$ be the step size, we consider the DDPM sampling scheme with Euler–Maruyama
discretization. Our Assumption \ref{assump} are equivalent to Assumptions~6.1 and 6.2 ($L$-Lipschitz score and
finite second moment) in \citep{gaussianmixture}, which develops from \citep{chen202}. For $h \lesssim 1/L$,
\[
\operatorname{TV}(q_T^h,p_0)
\;\lesssim\;
\underbrace{\sqrt{\mathrm{KL}(p_0\| \mathcal N(0,I))}e^{-T}}_{\text{forward convergence}}
\;+\;
\underbrace{(L\sqrt{d h}+Lm_2 h)\sqrt{T}}_{\text{discretization error}}
\;+\;
\underbrace{\varepsilon_0\sqrt{T}}_{\text{score error}},
\]
where $q_T^h$ is the law of the discrete sampler at time $T$, $d$ is the ambient space dimension, and
$m_2^2 = \mathbb E_{p_0}\|X\|^2$.
In particular, the discretization contribution to the total variation
distance is $O(h^{1/2})$ as $h \to 0$ for fixed $T,d,L,m_2$.

Denote $Y_T^{\text{ODE},h}$ and $Y_T^{\text{SDE},h}$ as the ODE/SDE fine-tuned inference under step size $h$, by triangular inequality,
\begin{align*}
W_2(Y_T^{\text{ODE},h}, Y_T^{\text{SDE},h})\leq 
W_2(Y_T^{\text{ODE}}, Y_T^{\text{SDE}})+
\underbrace{W_2(Y_T^{\text{ODE}, h}, Y_T^{\text{ODE}})+
W_2(Y_T^{\text{SDE}}, Y_T^{\text{SDE}, h})}_{\epsilon_d(h)}
\end{align*}

Since
$\sup_t \mathbb E\|X_t\|^2 \le \infty$, a standard coupling method between $W_2$ and $\operatorname{TV}$ distance gives.
\[
W_2(Y_T^\theta, Y_T^{\theta,h})
\;\;\lesssim\;\;
\operatorname{TV}(Y_T^\theta,Y_T^{\theta, h})^{1/2}.
\]

Therefore, the contribution of time discretization to the
$W_2$ error is bounded by the additive term
\[
\epsilon_{d}(h)
:= C\,\bigl(L\sqrt{d h}+Lm_2 h\bigr)^{1/2},
\]
for some constant $C$ depending only on $T,d$. In the continuous time
limit, $\epsilon_{d}(h) \to 0.$

\section{Discussion on DDIM and gDDIM under high stochasticity}\label{a-ddim}
When $1.0<\eta <+\infty$, the stochasticity level of gDDIM is always upper-bounded by
\[\sigma_t^{\text{gDDIM}}(\eta) = (1-\alpha_{t-\Delta t})\left(1-\left(\frac{1-\alpha_{t-\Delta t}}{1-\alpha_t}\right)^{\eta^2}\left(\frac{\alpha_t}{\alpha_{t-\Delta t}}\right)^{\eta^2}\right)\leq 1-\alpha_{t-\Delta t}<+\infty;\]
whereas the linear interpolation of DDIM gives an unbounded
\[\sigma_t^{{\text{DDIM}}}(\eta) = \eta\sqrt{\frac{1-\alpha_{t-\Delta t}}{1-\alpha_t}}\sqrt{1-\frac{\alpha_t}{\alpha_{t-\Delta t}}}.\]
In order to ensure the well-posedness of $\sqrt{(1-\alpha_{t-\Delta t}-\sigma_t^2)(1-\alpha_t)}$ in \eqref{eq:gddim}, the conventional discretization requires
\[\sigma_t^{\text{DDIM}}\leq 1-\alpha_{t-\Delta t}.\]
We may attempt to bypass the restriction by regulating
\[\sigma_t^{\text{linear DDIM}}(\eta) = \min\{1-\alpha_{t-\Delta t}, \eta_t^{\text{DDIM}}(\eta)\}.\]
However, this changes the terminal marginals in \eqref{3.4}. Therefore, the classical DDIM interpolation cannot simultaneously support arbitrary $\eta>1.0$ and preserve the terminal marginal, whereas gDDIM does.

\newpage
\section{Hyperparameters}\label{hyperparameters}
All experiments are conducted on 7 Nvidia A100 GPUs. Mixed precision training is used with the bfloat16
(bf16) format.
\subsection{DDPO Experiments}
We follow the setup of \cite{ddpo}, using denoising step $T = 50$ and guidance weight $w = 5.0$ throughout all experiments. We also use the AdamW optimizer \cite{adamw} with default weight decay 1e-4 and optimal learning rates for different reward functions. Reward gaps under four reward functions are shown in Figure \ref{ddpo_curve} in Section \ref{exps} and Appendix \ref{a-f}.
\begin{table}[h]
\centering
\caption{DDPO hyperparameters}
\label{tab:ddpo-only}
\setlength{\tabcolsep}{6pt}
\renewcommand{\arraystretch}{1.1}
\begin{tabular}{l l c c c c}
\toprule
 &  & \textbf{ImageReward} & \textbf{PickScore} & \textbf{HPSv2} & \textbf{Aesthetic} \\
\midrule
\multirow{4}{*}{\textbf{DDPO}}
 & Batch size (Per-GPU) & 48  & 24 & 24  & 32  \\
 & Samples per iteration (Global) & 336  & 168 & 168  & 224  \\
 & Gradient updates per iteration & 2 & 2 & 2 & 4 \\
 & Clip range & 1e-5 & 5e-5 & 1e-4 & 1e-4 \\
 & Optimizer Learning Rate & 6e-4 & 6e-4 & 3e-4 & 3e-4 \\
\bottomrule
\end{tabular}
\end{table}

\textbf{Animal Prompts} dataset consists of 398 animal labels extracted from \textit{ImageNet-1k class labels} \cite{imagenet}, often with comma-separated synonyms and scientific names.

\textbf{Comprehensive Prompts} dataset consists of 300 detailed and diverse descriptions of animals, vehicles, pieces of furniture, and landscapes with designated backgrounds, dynamics, or textiles.

\subsection{MixGRPO Experiments}
We follow the setup of \cite{MixGRPO}, letting reward model be "multi$\_$reward" with equal weights. We set $T = 15$ as the denoising steps, AdamW optimizer with learning rate 1e-5 and weight decay 1e-4. For GRPO, the generation group size is 12 and clip range is 1e-4. We perform 12 gradient updates per iteration.

\section{LLM Usage}
Large Language Model (LLM) assists in LaTeX graphic alignments, spelling checks, and solving environment conflict issues in implementing DDPO and MixGRPO.

\newpage

\section{More Experiment Results}\label{a-f}
\subsection{DDPO Images on different training steps}
\begin{figure}[ht]
  \centering
  \setlength{\tabcolsep}{0pt}
  \renewcommand{\arraystretch}{0}


  \begin{tabular}{@{}*{12}{c}@{}}
    \includegraphics[width=\dimexpr\linewidth/12\relax,clip,trim=0 0 0 0]{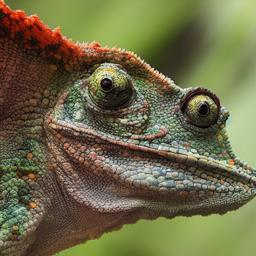} &
    \includegraphics[width=\dimexpr\linewidth/12\relax,clip,trim=0 0 0 0]{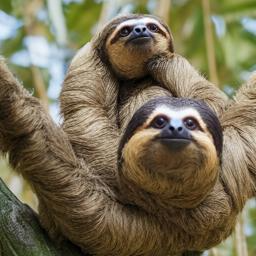} &
    \includegraphics[width=\dimexpr\linewidth/12\relax,clip,trim=0 0 0 0]{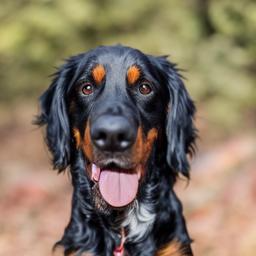} &
    \includegraphics[width=\dimexpr\linewidth/12\relax,clip,trim=0 0 0 0]{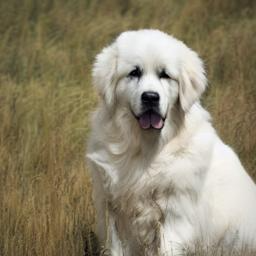} &
    \includegraphics[width=\dimexpr\linewidth/12\relax,clip,trim=0 0 0 0]{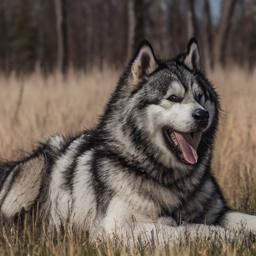} &
    \includegraphics[width=\dimexpr\linewidth/12\relax,clip,trim=0 0 0 0]{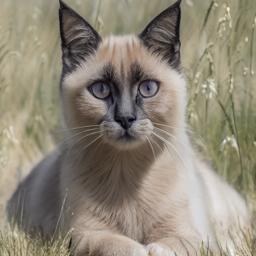} &
    \includegraphics[width=\dimexpr\linewidth/12\relax,clip,trim=0 0 0 0]{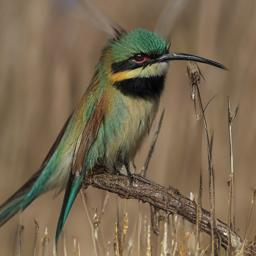} &
    \includegraphics[width=\dimexpr\linewidth/12\relax,clip,trim=0 0 0 0]{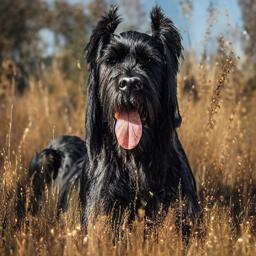} &
    \includegraphics[width=\dimexpr\linewidth/12\relax,clip,trim=0 0 0 0]{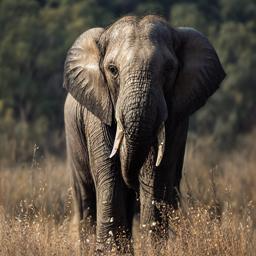} &
    \includegraphics[width=\dimexpr\linewidth/12\relax,clip,trim=0 0 0 0]{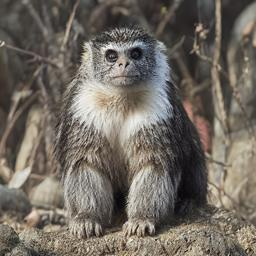} &
    \includegraphics[width=\dimexpr\linewidth/12\relax,clip,trim=0 0 0 0]{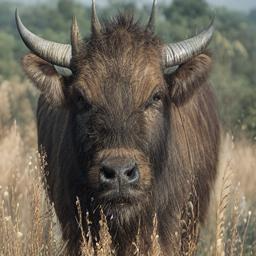} &
    \includegraphics[width=\dimexpr\linewidth/12\relax,clip,trim=0 0 0 0]{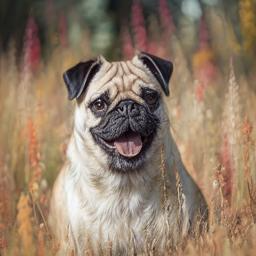} \\
    \includegraphics[width=\dimexpr\linewidth/12\relax,clip,trim=0 0 0 0]{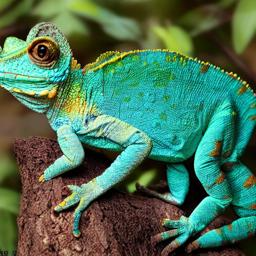} &
    \includegraphics[width=\dimexpr\linewidth/12\relax,clip,trim=0 0 0 0]{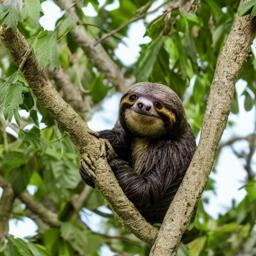} &
    \includegraphics[width=\dimexpr\linewidth/12\relax,clip,trim=0 0 0 0]{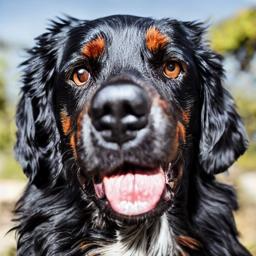} &
    \includegraphics[width=\dimexpr\linewidth/12\relax,clip,trim=0 0 0 0]{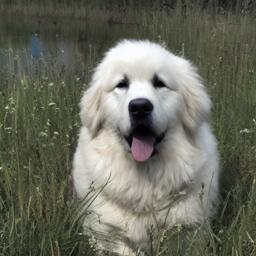} &
    \includegraphics[width=\dimexpr\linewidth/12\relax,clip,trim=0 0 0 0]{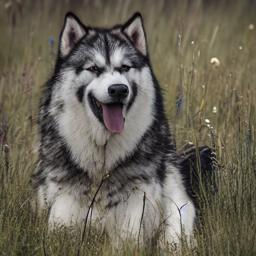} &
    \includegraphics[width=\dimexpr\linewidth/12\relax,clip,trim=0 0 0 0]{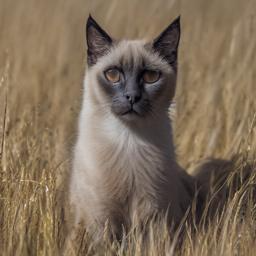} &
    \includegraphics[width=\dimexpr\linewidth/12\relax,clip,trim=0 0 0 0]{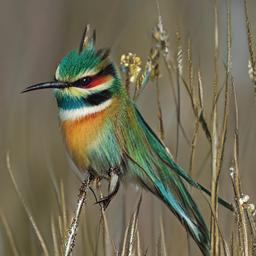} &
    \includegraphics[width=\dimexpr\linewidth/12\relax,clip,trim=0 0 0 0]{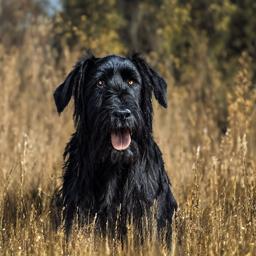} &
    \includegraphics[width=\dimexpr\linewidth/12\relax,clip,trim=0 0 0 0]{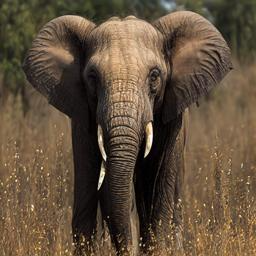} &
    \includegraphics[width=\dimexpr\linewidth/12\relax,clip,trim=0 0 0 0]{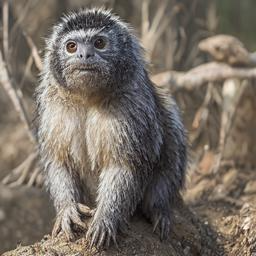} &
    \includegraphics[width=\dimexpr\linewidth/12\relax,clip,trim=0 0 0 0]{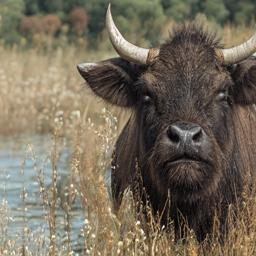} &
    \includegraphics[width=\dimexpr\linewidth/12\relax,clip,trim=0 0 0 0]{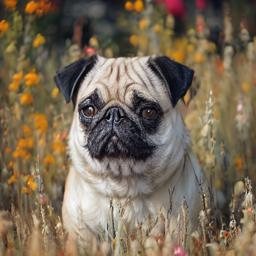}
  \end{tabular}

  \caption{SDE (top) and ODE (bottom) sampling from every 100 training steps under PickScore with $\eta = 1.2$.\\
  Prompts (from left to right): ``African chameleon, Chamae", ``Gordon setter", ``Great Pyrenees", ``malamute, malemute, Alask, Bra", ``Siamese cat, Siamese", ``bee eater", ``gaint schnauzer", ``Indian elephant, Elephas", ``marmoset", ``water buffalo, water ox", ``pug, pug dog".}
  \label{fig:2x12_nospace}
\end{figure}

\subsection{DDPO Images under ImageReward}
\begin{figure}[ht]
  \centering
  \setlength{\tabcolsep}{0pt}
  \renewcommand{\arraystretch}{0}

  \begin{tabular}{@{}*{4}{c}@{}}
    \includegraphics[width=\dimexpr\linewidth/8\relax,clip,trim=0 0 0 0]{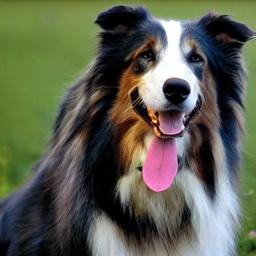} &
    \includegraphics[width=\dimexpr\linewidth/8\relax,clip,trim=0 0 0 0]{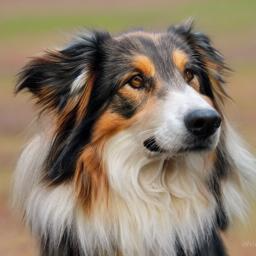} &
    \includegraphics[width=\dimexpr\linewidth/8\relax,clip,trim=0 0 0 0]{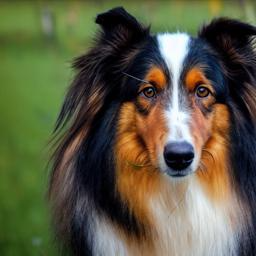} &
    \includegraphics[width=\dimexpr\linewidth/8\relax,clip,trim=0 0 0 0]{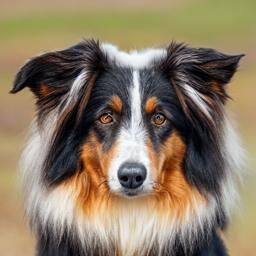} \\
    \includegraphics[width=\dimexpr\linewidth/8\relax,clip,trim=0 0 0 0]{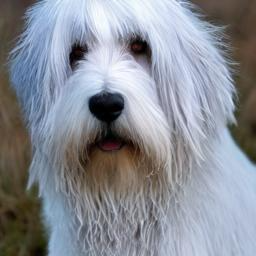} &
    \includegraphics[width=\dimexpr\linewidth/8\relax,clip,trim=0 0 0 0]{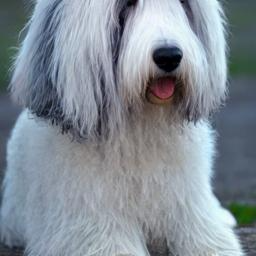} &
    \includegraphics[width=\dimexpr\linewidth/8\relax,clip,trim=0 0 0 0]{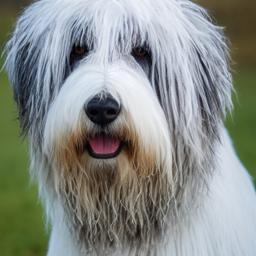} &
    \includegraphics[width=\dimexpr\linewidth/8\relax,clip,trim=0 0 0 0]{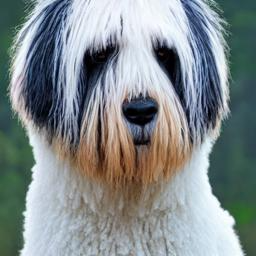} \\
    \includegraphics[width=\dimexpr\linewidth/8\relax,clip,trim=0 0 0 0]{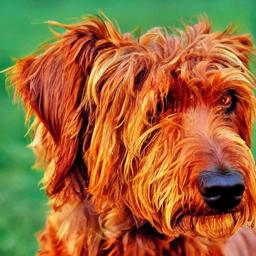} &
    \includegraphics[width=\dimexpr\linewidth/8\relax,clip,trim=0 0 0 0]{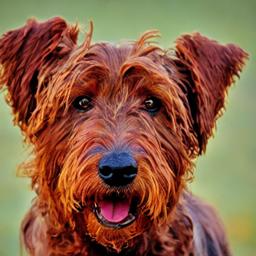} &
    \includegraphics[width=\dimexpr\linewidth/8\relax,clip,trim=0 0 0 0]{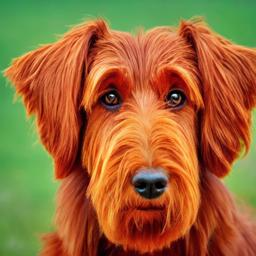} &
    \includegraphics[width=\dimexpr\linewidth/8\relax,clip,trim=0 0 0 0]{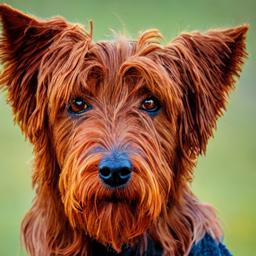}
  \end{tabular}

  \caption{Sampling schemes (from left to right): (i) SDE with $\eta=0.75$, (ii) ODE with $\eta=0.75$; (iii) SDE with $\eta=1.2$, (iv) ODE with $\eta=1.2$.\\
  Prompts (from top to bottom): ``collie", ``old English sheepdog, bob", ``Irish terrier".}
  \label{fig:3x4_nospace}
\end{figure}

\subsection{DDPO Images under PickScore}
\begin{figure}[ht]
  \centering
  \setlength{\tabcolsep}{0pt}
  \renewcommand{\arraystretch}{0}

  \begin{tabular}{@{}*{4}{c}@{}}
    \includegraphics[width=\dimexpr\linewidth/8\relax,clip,trim=0 0 0 0]{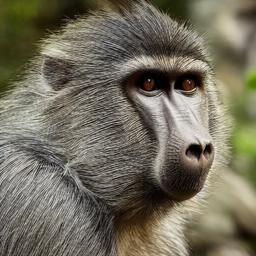} &
    \includegraphics[width=\dimexpr\linewidth/8\relax,clip,trim=0 0 0 0]{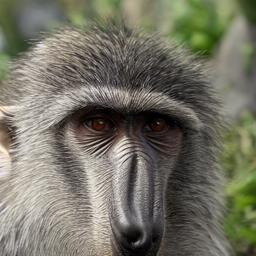} &
    \includegraphics[width=\dimexpr\linewidth/8\relax,clip,trim=0 0 0 0]{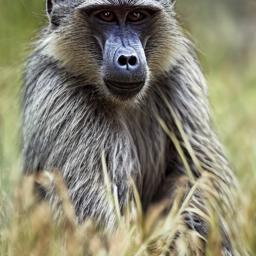} &
    \includegraphics[width=\dimexpr\linewidth/8\relax,clip,trim=0 0 0 0]{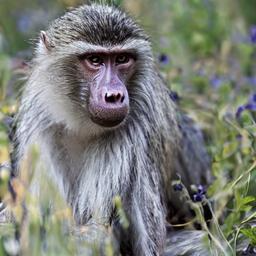} \\
    \includegraphics[width=\dimexpr\linewidth/8\relax,clip,trim=0 0 0 0]{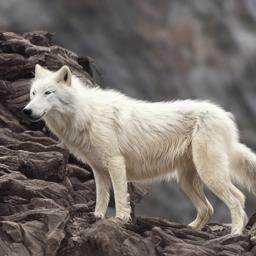} &
    \includegraphics[width=\dimexpr\linewidth/8\relax,clip,trim=0 0 0 0]{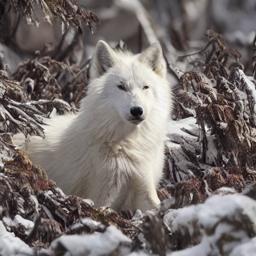} &
    \includegraphics[width=\dimexpr\linewidth/8\relax,clip,trim=0 0 0 0]{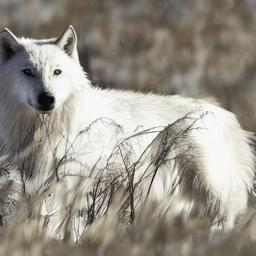} &
    \includegraphics[width=\dimexpr\linewidth/8\relax,clip,trim=0 0 0 0]{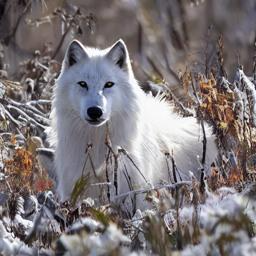} \\
    \includegraphics[width=\dimexpr\linewidth/8\relax,clip,trim=0 0 0 0]{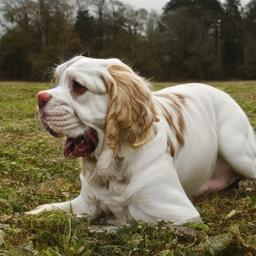} &
    \includegraphics[width=\dimexpr\linewidth/8\relax,clip,trim=0 0 0 0]{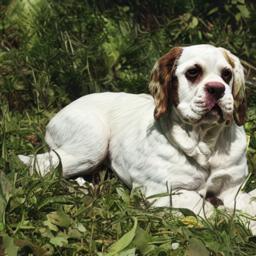} &
    \includegraphics[width=\dimexpr\linewidth/8\relax,clip,trim=0 0 0 0]{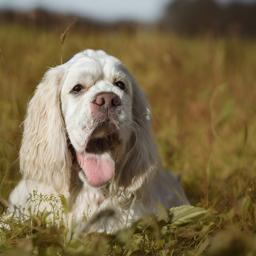} &
    \includegraphics[width=\dimexpr\linewidth/8\relax,clip,trim=0 0 0 0]{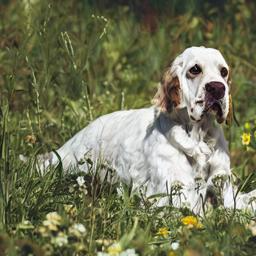}
  \end{tabular}

  \caption{Sampling schemes (from left to right): (i) SDE with $\eta=0.75$, (ii) ODE with $\eta=0.75$; (iii) SDE with $\eta=1.5$, (iv) ODE with $\eta=1.5$.\\
  Prompts (from top to bottom): ``baboon", ``white wolf, Arctic wolf", ``clumber, clumber spaniel".}
\end{figure}

\newpage
\subsection{DDPO Images under more comprehensive prompts}\label{a-g6}
\begin{figure}[ht]
  \centering
  \begingroup
    \setlength{\tabcolsep}{0pt}\renewcommand{\arraystretch}{0}

    \begin{methodbox}{SDE}{SDEblue}
      \makebox[\linewidth][c]{%
        \begin{tabular}{@{}c c c c@{}}
          \includegraphics[width=.2\linewidth]{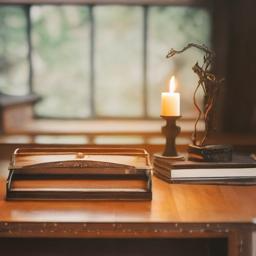} &
          \includegraphics[width=.2\linewidth]{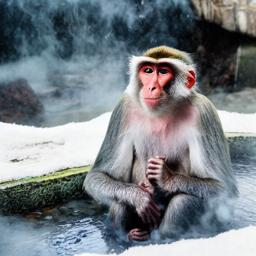} &
          \includegraphics[width=.2\linewidth]{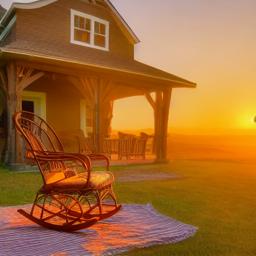} &
          \includegraphics[width=.2\linewidth]{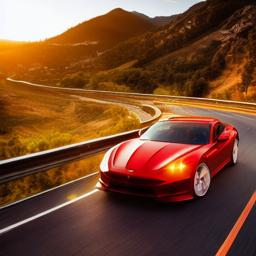} \\
        \end{tabular}%
      }
    \end{methodbox}

    \vspace{-1pt} 

    \begin{methodbox}{ODE}{ODEgreen}
      \makebox[\linewidth][c]{%
        \begin{tabular}{@{}c c c c@{}}
          \includegraphics[width=.2\linewidth]{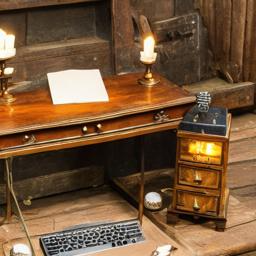} &
          \includegraphics[width=.2\linewidth]{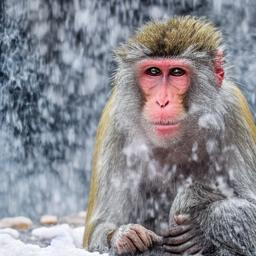} &
          \includegraphics[width=.2\linewidth]{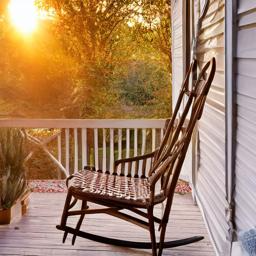} &
          \includegraphics[width=.2\linewidth]{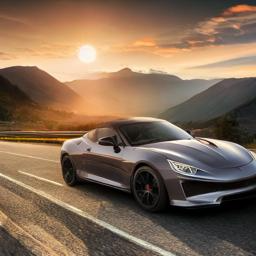} \\
        \end{tabular}%
      }
    \end{methodbox}

  \endgroup
  \vspace{1mm}
  \caption{ODE (below) image generation preserves prompt instructions with better quality on details compared to SDE (above) image generation under large stochasticity $(\eta = 1.2)$.\\
  Prompts (from left to right): ``A vintage writing desk with an open journal and a flickering candle.'', ``A macaque soaking in a steaming hot spring, surrounded by falling snow.'', ``A wicker rocking chair on a wrap-around porch during golden hour.'', ``A sleek sports car drifting on a mountain highway during golden hour.''}
  \label{img1}
\end{figure}

\subsection{DDPO Reward Gaps for other rewards}
\subsubsection{ImageReward}
\begin{figure}[ht]
  \centering
  \includegraphics[width=0.75\linewidth]{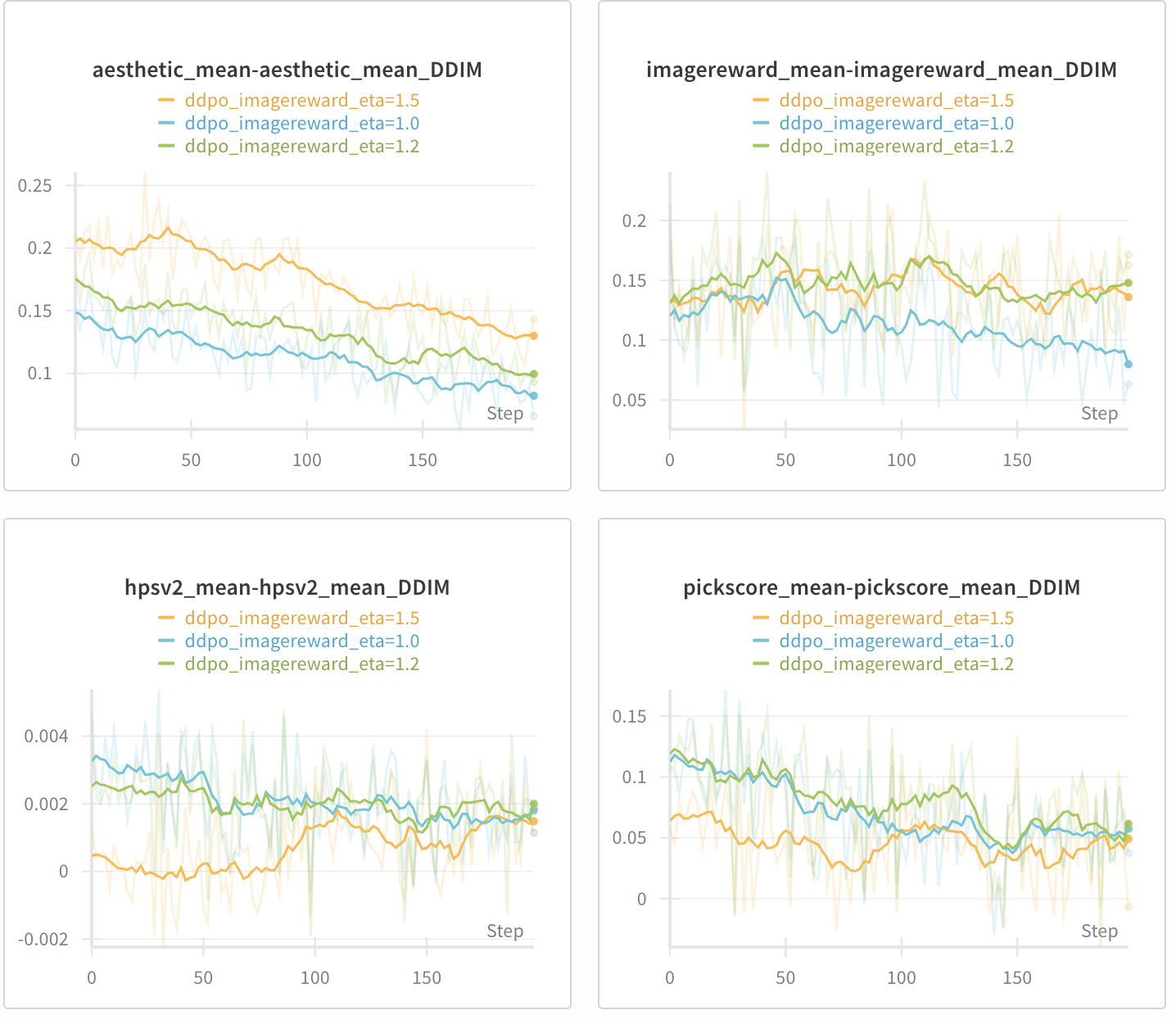}
  \caption{Bounded reward gaps trained under ImageReward for 200 steps with stochasticity $\eta \in\{ 1.0, 1.2, 1.5\}$}
\end{figure}

\newpage

\subsubsection{HPSv2}
\begin{figure}[ht]
  \centering
  \includegraphics[width=0.75\linewidth]{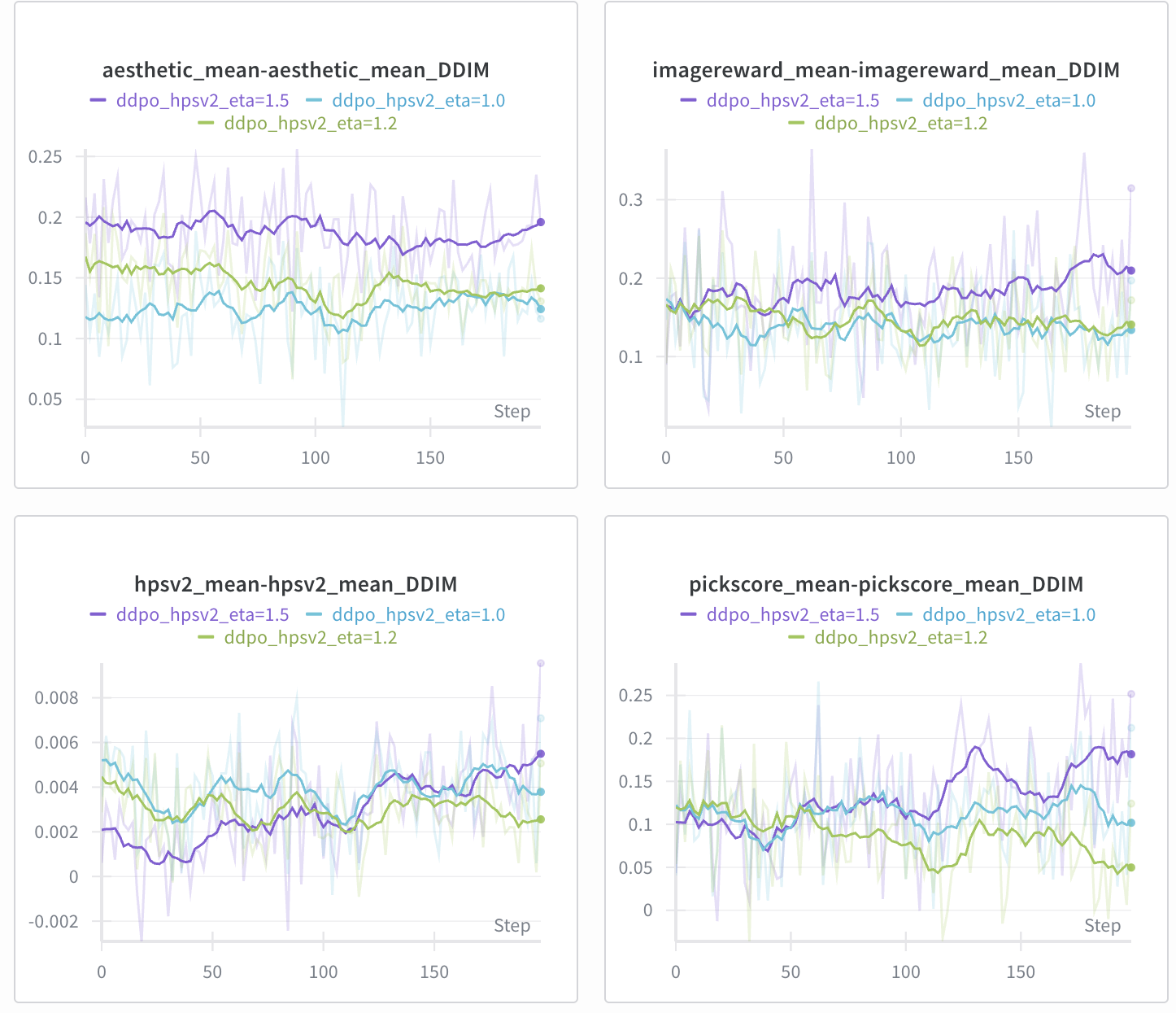}
  \caption{Bounded reward gaps trained under HPSv2 for 200 steps with stochasticity $\eta \in\{ 1.0, 1.2, 1.5\}$}
\end{figure}

\subsubsection{Aesthetic}
\begin{figure}[ht]
  \centering
  \includegraphics[width=0.8\linewidth]{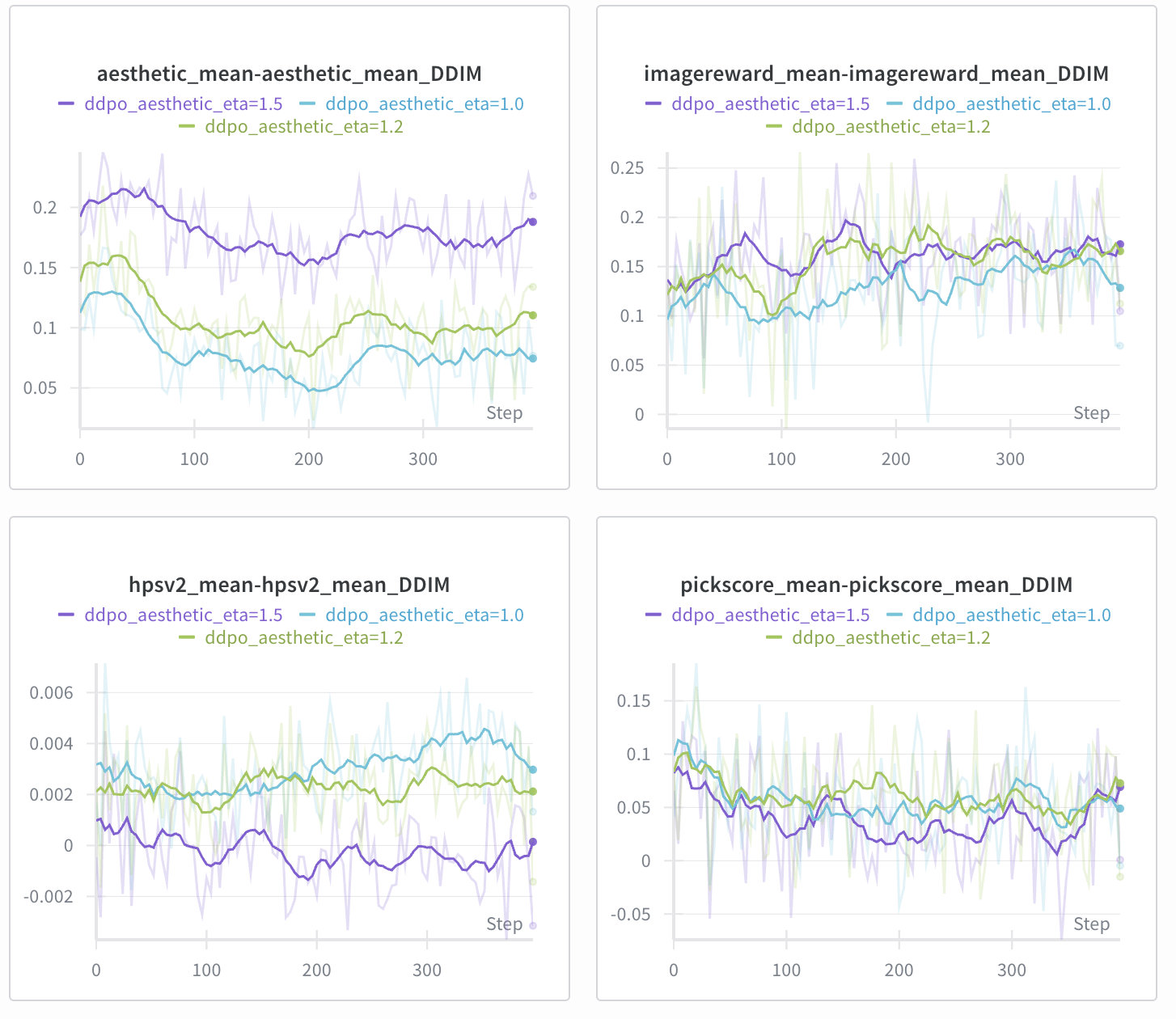}
  \caption{Bounded reward gaps trained under Aesthetic for 200 steps with stochasticity $\eta \in\{ 1.0, 1.2, 1.5\}$}
\end{figure}

\end{document}